\newtheorem{theorem}{Theorem}
\newtheorem{lemma}[theorem]{Lemma}
\newtheorem{corollary}[theorem]{Corollary}
\newtheorem{definition}{Definition}
\newtheorem{example}{Example}
\newcommand{\R}{\mathbb{R}}
\newcommand{\E}{\mathbb{E}}
\DeclareMathOperator{\scf}{sc}
\DeclareMathOperator{\spn}{span}
\DeclareMathOperator{\AV}{AV}
\DeclareMathOperator{\MC}{MC}
\DeclareMathOperator{\CC}{CC}
\DeclareMathOperator{\PAV}{PAV}
\DeclareMathOperator{\SAV}{SAV}
\DeclareMathOperator{\prm}{Pr}
\title{\bf Evaluating Approval-Based Multiwinner Voting\\in Terms of Robustness to Noise\thanks{A preliminary version of the results of this paper with the same title appeared in Proceedings of the 29th International Joint Conference on Artificial Intelligence (IJCAI), pages 74--80, 2020.}}
\author{Ioannis~Caragiannis\thanks{Department of Computer Science, Aarhus University, {\AA}bogade 34, 8200 Aarhus N, Denmark. Email: \texttt{iannis@cs.au.dk}.}
\and Christos~Kaklamanis\thanks{Computer Technology Institute ``Diophantus'' \& Department of Computer Engineering and Informatics, University of Patras, 26504 Rion-Patras, Greece. Email: \texttt{kakl@ceid.upatras.gr}.}
\and Nikos~Karanikolas\thanks{Department of Computer Engineering and Informatics, University of Patras, 26504 Rion-Patras, Greece. Email: \texttt{nkaranik@ceid.upatras.gr}.}
\and George~A.~Krimpas\thanks{Department of Computer Engineering and Informatics, University of Patras, 26504 Rion-Patras, Greece. Email: \texttt{krimpas@ceid.upatras.gr}.}}
\date{}
\begin{document}

\maketitle

\begin{abstract}
Approval-based multiwinner voting rules have recently received much attention in the Computational Social Choice literature. Such rules aggregate approval ballots and determine a winning committee of alternatives. To assess effectiveness, we propose to employ new noise models that are specifically tailored for approval votes and committees. These models take as input a ground truth committee and return random approval votes to be thought of as noisy estimates of the ground truth. A minimum robustness requirement for an approval-based multiwinner voting rule is to return the ground truth when applied to profiles with sufficiently many noisy votes. Our results indicate that approval-based multiwinner voting can indeed be robust to reasonable noise. We further refine this finding by presenting a hierarchy of rules in terms of how robust to noise they are.
\medskip

\noindent{\bf Keywords:} computational social choice; approval-based voting; multiwinner voting rules; noise models
\end{abstract}

\section{Introduction}\label{sec:intro}
Voting has received much attention by the AI and Multiagent Systems community recently, mostly due to its suitability for simple and effective decision making. One popular line of research, that originates from~\citet{A51}, has aimed to characterize voting rules in terms of the {\em social choice axioms} they satisfy. Another approach views voting rules as {\em estimators}. It assumes that there is an objectively correct choice, a {\em ground truth}, and votes are noisy estimates of it. Then, the main criterion for evaluating a voting rule is whether it can determine the ground truth as outcome when applied to noisy votes.

A typical scenario in studies that follow the second approach employs a hypothetical {\em noise model} that uses the ground truth as input and produces random votes. Then, a voting rule is applied on profiles of such random votes and is considered effective if it acts as a {\em maximum likelihood estimator}~\citep{CS05,Y88} or if it has {\em low sample complexity}~\citep{CPS16}. As such evaluations are heavily dependent on the specifics of the noise model, relaxed effectiveness requirements, such as the {\em accuracy in the limit}, sought in broad classes of noise models~\citep{CPS16} can be more informative. 

We restrict our attention to {\em approval voting}, where ballots are simply sets of alternatives that are approved by the voters~\citep{LS10}. Furthermore, we consider {\em multiwinner voting rules}~\citep{FSST17}, which determine committees of alternatives as outcomes~\citep{K10,ABC+17}. In particular, we focus on approval-based counting choice rules (or, simply, {\em ABCC rules}), which were defined recently by~\citet{LS18}. A famous rule in this category is known as multiwinner approval voting (AV). Each alternative gets a point every time it appears in an approval vote and the outcome consists of a fixed number of alternatives with the highest scores.

We consider noise models that are particularly tailored for approval votes and committees. These models use a committee as ground truth and produce random sets of alternatives as votes. We construct broad classes of noise models that share a particular structure, parameterized by {\em distance metrics} defined over sets of alternatives. In this way, we adapt to approval-based multiwinner voting the approach of~\citet{CPS16} for voting rules over rankings.

\begin{figure*}[t]
	\centering
\includegraphics[trim={40 11.1cm 0 0}, clip, scale=0.47]{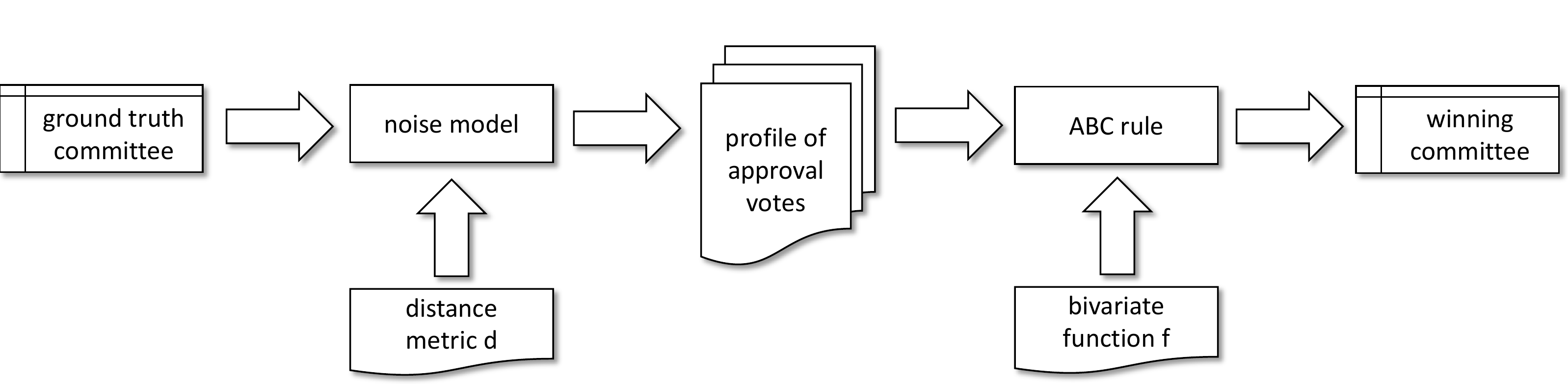}
	\caption{Our evaluation framework.}
	\label{fig}
\end{figure*}

Figure~\ref{fig} illustrates our evaluation framework. The noise model is depicted at the left. It takes as input the ground truth committee and its probability distribution over approval votes {which} is consistent to a distance metric $d$. Repeated executions of the noise model produce a profile of random approval votes. The ABCC rule (defined using a bivariate function $f$; see Section~\ref{sec:prelim}) is then applied on this profile and returns {one or more winning committees}. Our requirement for the ABCC rule is to be accurate in the limit {(informally, on profiles with infinitely many votes, it must return the ground truth as the unique winning committee),} not only for a single noise model, but for all models that belong to a sufficiently broad class. The breadth of this class quantifies the robustness of the ABCC rule to noise.

The details of our framework are presented in Section~\ref{sec:prelim}. Our results indicate that it indeed allows for a classification of ABCC rules in terms of their robustness to noise. In particular, we identify (in Section~\ref{sec:mc}) the modal committee rule (MC) as the ultimately robust ABCC rule: MC is robust against all kinds of reasonable noise. AV follows in terms of robustness and seems to outperform other known ABCC rules (see Section~\ref{sec:av}). In contrast, the well-known approval Chamberlin-Courant (CC) rule is the least robust. On the other hand, all ABCC rules are robust if we restrict noise sufficiently (see Section~\ref{sec:abc}).
We conclude with a discussion on extensions and open problems in Section~\ref{sec:open}.

\subsection{Further related work}
Approval-based multiwinner voting rules have been studied in terms of their computational complexity~\citep{AGG+15,SFL16}, axiomatic properties~\citep{SEL+17,LS18,ABC+17}, as well as their applications~\citep{BLS17}. In particular, axiomatic work has focused on two different principles that govern multiwinner rules: diversity and individual excellence. \citet{LS19} attempt a quantification of how close an approval-based multiwinner voting rule is to these two principles. 
We remark that the primary focus of the current paper is on individual excellence, since a ground truth committee can be interpreted as the ``excellent'' choice in a very natural way.

The robustness of approval voting has been previously evaluated against noise models, using either the MLE~\citep{PS15} or the sample complexity~\citep{CM17} approach. These papers assume a ranking of the alternatives as ground truth, generate approval votes that consist of the top alternatives in rankings produced according to the noise model of \citet{M57}, and assess how well approval voting recovers the ground truth ranking. We believe that our framework is fairer to approval votes, as recovering an underlying ranking when voters have very limited power to rank is very demanding. The robustness of (non-approval) multiwinner voting against noise has been studied by \citet{PRS12}. Different notions of robustness in multiwinner voting are considered by \citet{GF19} and~\citet{BFKNST17}.

Additional references related to specific ABCC rules are given in the next section. We remark that the modal committee (MC) rule is similar in spirit to the modal ranking rule considered by~\citet{CPS14}.

\section{Preliminaries}\label{sec:prelim}
Throughout the paper, we denote by $A$ the set of alternatives. We use $m=|A|$ and denote the committee size by $k$. The term committee refers to a set of exactly $k$ alternatives.

\subsection{Approval-based multiwinner voting}
An approval vote is simply a subset of the alternatives (of any size). An approval-based multiwinner voting rule takes as input a profile of approval votes and returns one or more winning committees.

We particularly consider voting rules that belong to the class of approval-based counting choice rules (or, simply, ABCC rules), introduced by~\citet{LS18}. Such a rule is defined by a bivariate function $f$, with $f(x,y)$ indicating a non-negative score a committee gets from an approval vote containing $y$ alternatives, $x$ of which are common with the committee. $f$ is non-decreasing in its first argument. Formally, $f$ is defined on the set $\mathcal{X}_{m,k}$, which consists of all pairs $(x,y)$ of possible values of $|U\cap S|$ and $|S|$, given that $U$ is $k$-sized and $S$ can be any subset of the $m$ alternatives of $A$. I.e., $\mathcal{X}_{m,k}$ is the set
\begin{align*}
\left\{(x,y): y=0, 1, ..., m, x=\max\{k+y-m,0\}, ..., \min\{y,k\} \right\}.
\end{align*}

The score of a committee is simply the total score it gets from all approval votes in a profile. Winning committees are those that have maximum score. We extensively use ``the ABCC rule $f$'' to refer to the ABCC rule that uses the bivariate function $f$. We denote the score that an ABCC rule $f$ assigns to the committee $U$ given a profile $\Pi=(S_i)_{i\in [n]}$ of $n$ votes by $\scf_f(U,\Pi)=\sum_{i=1}^n{f(|U\cap S_i|,|S|)}$. With some abuse of notation, we use $\scf_f(U,S_i)$ to refer to the score $U$ gets from vote $S_i$. Hence, $\scf_f(U,\Pi)=\sum_{i=1}^n{\scf_f(U,S_i)}$.

Well-known ABCC rules include:
\begin{itemize}
	\item Multiwinner approval voting (AV), which uses the function $f_{\AV}(x,y)=x$.
	\item Approval Chamberlin-Courant (CC), which uses the function $f_{\CC}(x,y)=\min\{1,x\}$. The rule falls within a more general context considered by~\citet{CC83}.
	\item Proportional approval voting (PAV), which uses the function $f_{\PAV}(x,y)=\sum_{i=1}^x{1/i}$.
\end{itemize}
These rules belong to the class of rules that originate from the work of~\citet{T95}. A Thiele rule uses a vector $\langle w_1, w_2, ..., w_k\rangle$ of non-negative weights to define $f(x,y)=\sum_{i=1}^x{w_j}$. Other known Thiele rules include the $p$-Geometric rule~\citep{SFL16} and Sainte Lagu\"e  approval voting~\citep{LS19}.

A well-known non-Thiele rule is the satisfaction approval voting (SAV) rule that uses $f_{\SAV}(x,y)=x/y$ for $y>0$ and $f(x,y)=0$ otherwise~\citep{BK14}. Let us also introduce the {\em modal committee} (MC) rule which returns the committee (or committees) that has maximum number of appearances as approval votes in the profile. MC is also non-Thiele; it uses $f(k,k)=1$ and $f(x,y)=0$ otherwise.

\subsection{Noise models}
We employ noise models to generate approval votes, assuming that the ground truth is a committee. Denoting the ground truth by $U\subseteq A$, a noise model $\mathcal{M}$ produces random approval votes according to a particular distribution that defines the probability $\prm_\mathcal{M}[S|U]$ to generate the set $S\subseteq A$ when the ground truth is $U$.

Let us give the following noise model $\mathcal{M}_p$\footnote{Even though it might look as a toy example of a noise model, a more careful look will reveal that $\mathcal{M}_p$ can be seen as the analog of the famous Mallows noise model \citep{M57} in the classical social choice setting when each voter provides a strict ranking of the alternatives instead of an approval set. Interestingly, the ABCC rule AV turns out to be a maximum likelihood estimator for $\mathcal{M}_p$ (analogously to the fact that the well-known Kemeny rule is an MLE for Mallows; e.g., see~\citealp{Y88}). As this is beyond the scope of the current paper, we present a proof in Appendix.} as an example. $\mathcal{M}_p$ uses a parameter $p\in (1/2,1]$. Given a ground truth committee $U$, $\mathcal{M}_p$ generates a random set $S\subseteq A$ by selecting each alternative of $U$ with probability $p$ and each alternative in $A\setminus U$ with probability $1-p$. Intuitively, the probability that a set will be generated depends on its ``distance'' from the ground truth: the higher this distance, the smaller this probability. To make this formal, we will need the {\em set difference}\footnote{Viewing sets as strings, the distance metric $d_\Delta$ is equivalent to the Hamming distance; see~\citet{DD16}.} distance metric $d_{\Delta}:2^A\rightarrow \R_{\geq 0}$ defined as $d_\Delta(X,Y)=|X\setminus Y|+|Y\setminus X|$. 

\begin{lemma}
	For $S\subseteq A$, $\prm_{\mathcal{M}_p}[S|U]=p^m \cdot \left(\frac{1-p}{p}\right)^{d_\Delta(U,S)}$.
\end{lemma}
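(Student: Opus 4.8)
The key observation is that the noise model $\mathcal{M}_p$ draws each alternative \emph{independently}, so $\prm_{\mathcal{M}_p}[S\mid U]$ factors as a product of $m$ terms, one per alternative. I would partition $A$ into the four cells of the Venn diagram of $U$ and $S$ and read off the contribution of each cell from the definition of $\mathcal{M}_p$: an alternative in $U\cap S$ contributes a factor $p$ (it is in $U$ and was selected), one in $U\setminus S$ contributes $1-p$ (in $U$ but not selected), one in $S\setminus U$ contributes $1-p$ (not in $U$ but selected), and one in $A\setminus(U\cup S)$ contributes $p$ (not in $U$ and not selected). Multiplying,
\[
\prm_{\mathcal{M}_p}[S\mid U]=p^{\,|U\cap S|}\,(1-p)^{\,|U\setminus S|}\,(1-p)^{\,|S\setminus U|}\,p^{\,|A\setminus(U\cup S)|}.
\]

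Next I would collect exponents. The exponent of $1-p$ is $|U\setminus S|+|S\setminus U|$, which is exactly $d_\Delta(U,S)$ by definition. For the exponent of $p$, I would use $|A\setminus(U\cup S)|=m-|U\cup S|=m-|U|-|S|+|U\cap S|$, so the exponent of $p$ equals $m-|U|-|S|+2|U\cap S|$. Then I would invoke the elementary identity $d_\Delta(U,S)=|U|+|S|-2|U\cap S|$ (obtained by writing $|U\setminus S|=|U|-|U\cap S|$ and $|S\setminus U|=|S|-|U\cap S|$) to rewrite the exponent of $p$ as $m-d_\Delta(U,S)$.

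Substituting back gives $\prm_{\mathcal{M}_p}[S\mid U]=p^{\,m-d_\Delta(U,S)}(1-p)^{\,d_\Delta(U,S)}$, and factoring out $p^m$ yields the claimed form $p^m\big(\tfrac{1-p}{p}\big)^{d_\Delta(U,S)}$; note $p\in(1/2,1]$ guarantees $p>0$ so the division is legitimate. There is no real obstacle here beyond careful bookkeeping with the set-size identities; the only place to be slightly attentive is double-counting, i.e.\ making sure the four cells $U\cap S$, $U\setminus S$, $S\setminus U$, $A\setminus(U\cup S)$ indeed partition $A$ so that the product over them accounts for each alternative exactly once.
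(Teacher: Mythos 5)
Your proposal is correct and follows essentially the same route as the paper's own proof: partition $A$ into the four Venn cells of $U$ and $S$, multiply the per-alternative selection probabilities, and collect the exponent of $p$ into $m-d_\Delta(U,S)$. Your version just spells out the set-size bookkeeping a bit more explicitly; there is no substantive difference.
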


\begin{proof}
By the definition of the noise model $\mathcal{M}_p$, the set $S$ is generated by the noise model $\mathcal{M}_p$ with ground truth committee $U$ when each alternative in $S\cap U$ is selected (this happens with probability $p$ independently for each alternative of the set), each alternative in $S\setminus U$ is selected (probability $1-p$ each), each alternative in $U\setminus S$ is not selected (probability $1-p$ each), and each alternative in $A\setminus (S\cup U)$ is not selected (probability $p$ each). Overall, 
\begin{align*}
    \prm_{\mathcal{M}_p}[S|U] &= p^{|S\cap U|}\cdot (1-p)^{|S\setminus U|}\cdot (1-p)^{|U\setminus S|}\cdot p^{|A\setminus (S\cup U)|}\\
    &=p^{m-d_{\Delta}(U,S)}\cdot (1-p)^{d_\Delta(U,S)}=p^m \cdot \left(\frac{1-p}{p}\right)^{d_\Delta(U,S)}
\end{align*}
as desired.
\end{proof}

So, since $p>1/2$, the probability $\prm_{\mathcal{M}_p}[S|U]$ is decreasing in $d_\Delta(U,S)$. We will consider general noise models $\mathcal{M}$ with $\prm_\mathcal{M}[S|U]$ depending on $d(U,S)$, where $d$ is a distance metric defined over subsets of $A$.

\begin{definition}\label{defn:d-mon}
	Let $d$ be a distance metric over sets of alternatives. A noise model $\mathcal{M}$ is called $d$-monotonic if for any two sets $S_1, S_2\subseteq A$, it holds $\prm_{\mathcal{M}}[S_1|U]>\prm_{\mathcal{M}}[S_2|U]$ if and only if $d(U,S_1)<d(U,S_2)$.
\end{definition}
Definition~\ref{defn:d-mon} implies that $\prm_{\mathcal{M}}[S_1|U]=\prm_{\mathcal{M}}[S_2|U]$ when $d(U,S_1)=d(U,S_2)$.

Besides the set difference metric used by $\mathcal{M}_p$, other well-known distance metrics \citep[see][]{DD16} are:
\begin{itemize}
	\item the normalized set difference or {\em Jaccard metric} $d_J$, defined as $d_J(X,Y)=\frac{d_\Delta(X,Y)}{|X\cup Y|}$,
	\item the maximum difference or {\em Zelinka metric} $d_Z$, defined as $d_Z(X,Y)=\max\{|X\setminus Y|,|Y\setminus X|\}$, and
	\item the normalized maximum difference or {\em Bunke-Shearer metric}	$d_{BS}$, defined as $d_{BS}(X,Y)=\frac{d_Z(X,Y)}{\max\{|X|,|Y|\}}$.
\end{itemize}
Notice that $d(X,Y)$ for the four specific distance metrics defined here depends only on $|X\setminus Y|$, $|Y\setminus X|$, $|X|$, and $|Y|$. Thus, these distance metrics are {\em alternative-independent}. Our results apply to the most general definition of {\em alternative-dependent} distances, where $d(X,Y)$ can also depend on the contents of  $X\setminus Y$, $Y\setminus X$, $X$, and $Y$.

\subsection{Evaluating ABCC rules against noise models} We aim to evaluate the effectiveness of ABCC rules when applied to random profiles generated by large classes of noise models. To this end, we use {\em accuracy in the limit} as a measure.

\begin{definition}[accuracy in the limit]\label{def:acc}
	An ABCC rule $f$ is called {\em accurate in the limit} for a noise model $\mathcal{M}$ if {for every $\varepsilon>0$ there exists $n_\varepsilon$ such that, for every profile with at least $n_\varepsilon$ approval votes produced by $\mathcal{M}$ with ground truth $U$, $f$ returns $U$ as the unique winning committee with probability at least $1-\varepsilon$.}
\end{definition}

Then, ABCC rules are evaluated in terms of {\em robustness} using the next definition.
\begin{definition}[robustness]
Let $d$ be a distance metric over sets of alternatives. An ABCC rule $f$ is monotone robust against $d$ (or $d$-monotone robust) if it is accurate in the limit for all $d$-monotonic noise models.
\end{definition}

We remark that even though we follow the standard definition (see~\citealp{LS18}) according to which an ABCC rule may return more than one winning committees, our definition of the accuracy in the limit (Definition~\ref{def:acc}) is particularly demanding and requires from the ABCC rule to return a {\em unique} committee with high probability. Our purpose here is to guarantee the maximum level of robustness. 

\section{MC is a Uniquely Robust ABCC Rule}\label{sec:mc}
We begin our technical exposition by identifying the unique ABCC rule that is monotone robust against {\em all} distance metrics. Our proofs, in the current and subsequent sections, make extensive use of the following lemma. The notation $S\sim \mathcal{M}(U)$ indicates that the random set $S$ is drawn from the noise model $\mathcal{M}$ with ground truth $U$.
\begin{lemma}\label{lem:suv}
{Let $\mathcal{M}$ be a noise model. An ABCC rule $f$ is
	\begin{enumerate}
		\item [a.] accurate in the limit for $\mathcal{M}$ if $\E_{S\sim \mathcal{M}(U)}[\scf_f(U,S)-\scf_f(V,S)]>0$ for every two different sets of alternatives $U,V\subseteq A$ with $|U|=|V|=k$.
		\item [b.] not accurate in the limit for $\mathcal{M}$ if~~$\E_{S\sim \mathcal{M}(U)}[\scf_f(U,S)-\scf_f(V,S)]<0$ for some pair of sets of alternatives $U,V\subseteq A$ with $|U|=|V|=k$.
		\end{enumerate}		
}
\end{lemma}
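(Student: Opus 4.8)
The plan is to derive both claims from the elementary behaviour, under the law of large numbers, of the \emph{per-vote score margin} between the ground truth and a competing committee. Fix the ground truth $U$ and let $\Pi=(S_i)_{i\in[n]}$ be a profile produced by $\mathcal{M}$, so that $S_1,\dots,S_n$ are i.i.d.\ with distribution $\mathcal{M}(U)$. For a committee $V$ with $|V|=k$, write $D(V)=\scf_f(U,S)-\scf_f(V,S)$ for $S\sim\mathcal{M}(U)$ and $\mu_V=\E_{S\sim\mathcal{M}(U)}[D(V)]$. Then $\scf_f(U,\Pi)-\scf_f(V,\Pi)=\sum_{i=1}^{n}D_i(V)$, a sum of $n$ i.i.d.\ copies of $D(V)$. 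Since $f$ is a fixed nonnegative function on the \emph{finite} set $\mathcal{X}_{m,k}$, we have $\scf_f(\cdot,S)\in[0,M]$ with $M=\max_{(x,y)\in\mathcal{X}_{m,k}}f(x,y)$, so each $D_i(V)$ lies in $[-M,M]$ deterministically; boundedness is the only integrability we need.

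For part~(a), the hypothesis gives $\mu_V>0$ for every committee $V\neq U$, and since there are only $\binom{m}{k}$ committees, $\mu^\ast=\min_{V\neq U}\mu_V$ is attained and strictly positive. For each fixed $V\neq U$ the (weak) law of large numbers gives $\prm[\sum_{i=1}^{n}D_i(V)\le 0]\to 0$; quantitatively, Hoeffding's inequality yields $\prm[\scf_f(V,\Pi)\ge\scf_f(U,\Pi)]\le\exp(-n\mu_V^2/(2M^2))\le\exp(-n(\mu^\ast)^2/(2M^2))$. A union bound over the at most $\binom{m}{k}-1$ competitors $V\neq U$ then shows $\prm[\scf_f(V,\Pi)\ge\scf_f(U,\Pi)\text{ for some }V\neq U]\to 0$, i.e.\ $U$ is the unique winning committee with probability tending to $1$; given $\varepsilon>0$, choosing $n_\varepsilon$ so that this failure probability is below $\varepsilon$ establishes accuracy in the limit.

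For part~(b), let $U,V$ be the given pair, take $U$ as the ground truth, and note the hypothesis reads $\mu_V<0$, i.e.\ $V$ strictly beats $U$ in expected per-vote score. By the law of large numbers $\tfrac1n\bigl(\scf_f(U,\Pi)-\scf_f(V,\Pi)\bigr)\to\mu_V<0$ in probability, hence $\prm[\scf_f(U,\Pi)\ge\scf_f(V,\Pi)]\to 0$; since any winning committee has maximum score, $U$ fails to be even a winning committee --- let alone the unique one --- with probability tending to $1$. Taking $\varepsilon=1/2$, there is no threshold $n_\varepsilon$ for which $f$ returns $U$ as the unique winning committee with probability at least $1-\varepsilon$ on all larger profiles, so $f$ is not accurate in the limit for $\mathcal{M}$.

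The argument is short and uses nothing about the structure of $\mathcal{M}$ (no metric, no monotonicity), which is exactly why the lemma is reusable in the later sections. The only points that need care are bookkeeping rather than substance: that the committee set is finite, so the minimum $\mu^\ast$ in part~(a) is positive and the union bound ranges over finitely many events (this would break if $m$ or $k$ grew with $n$, but they are fixed); and the exact translation of ``$f$ returns $U$ as the unique winning committee'' into the event $\{\scf_f(U,\Pi)>\scf_f(V,\Pi)\text{ for all }V\neq U\}$, whose complement is precisely the union-bounded event above. Once this equivalence is pinned down, the boundedness of $f$ on $\mathcal{X}_{m,k}$ makes the concentration step immediate.
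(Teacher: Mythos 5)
Your proposal is correct and follows essentially the same route as the paper's proof: write the score margin as a sum of $n$ i.i.d.\ bounded per-vote differences, apply Hoeffding plus a union bound over the finitely many competing committees for part (a), and use the same concentration bound to show $U$ loses to $V$ with probability tending to $1$ for part (b). The only cosmetic difference is that you bound the per-vote margin by $[-M,M]$ with $M=\max f$ while the paper uses the tighter interval $[a',b']$ of attainable values of $f(x_1,y)-f(x_2,y)$; this affects only constants.
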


\begin{proof}
In the proof, we will use the following variant of the Hoeffding inequality.

\begin{lemma}[\citealp{H63}]\label{lem:hoeffding}
Let $X_1$, $X_2$, ..., $X_\ell$ be i.i.d. random variables with $\E[X_i]=\mu$ and $X_i\in [a,b]$ for $i=1, ..., \ell$, and $X=\sum_{i\in [\ell]}{X_i}$. Then, 
\begin{align*}
    \prm[|X-\ell \mu| \geq t]\leq 2\exp\left(-\frac{2 t^2}{\ell(b-a)^2}\right).
\end{align*}
\end{lemma}

We will need some additional general notation. Denote by $a'$ and $b'$ the minimum and maximum values of the quantity $f(x_1,y)-f(x_2,y)$ over all triplets of integers $x_1$, $x_2$, and $y$ that define pairs $(x_1,y), (x_2,y)\in \mathcal{X}_{m,k}$. Also, for two $k$-sized sets of alternatives $U$ and $V$, define $\mu(U,V)=\E_{S\sim \mathcal{M}(U)}[\scf_f(U,S)-\scf_f(V,S)]$.

\paragraph{Part (a).} Define $\mu_{\min}$ as the minimum among all values $\mu(U,V)$ for the different $k$-sized sets of alternatives $V$ that are different than $U$. By the assumption of part (a) of the lemma, we have $\mu_{\min}>0$. Let $\varepsilon>0$ and 
\begin{align*}
    n_\varepsilon &= \frac{(b'-a')^2}{2\mu_{\min}^2}\ln\frac{2m^k}{\varepsilon}.
\end{align*}
We will first show that for every profile $\Pi=(S_i)_{i\in [n]}$ with at least $n_\varepsilon$ approval votes from the noise model $\mathcal{M}$ with ground truth $U$, the probability that rule $f$ returns $U$ as the unique winner is at least $1-\varepsilon$. 

First observe that
\begin{align*}
    \scf_f(U,\Pi)-\scf_f(V,\Pi) &= \sum_{i\in [n]}{\left(\scf_f(U,S_i)-\scf_f(V,S_i)\right)}
\end{align*}
for every $k$-sized set of alternatives $V$. The quantity $\scf_f(U,S_i)-\scf_f(V,S_i)$ is a random variable following the distribution of $\scf_f(U,S)-\scf_f(V,S)$, where the set $S$ is drawn randomly from the noise model $\mathcal{M}$ with ground truth $U$. Also, observe that the random variable $\scf_f(U,S)-\scf_f(V,S)$ takes values in $[a',b']$. Hence, the score difference $\scf_f(U,\Pi)-\scf_f(V,\Pi)$ is a sum of i.i.d random variables, each with expectation $\mu(U,V)$ and taking values in $[a',b']$.

Hence, we can apply Hoeffding inequality (Lemma~\ref{lem:hoeffding}) with $X=\scf_f(U,\Pi)-\scf_f(V,\Pi)$, $\ell=n$, $a=a'$, $b=b'$, and $\mu=\mu(U,V)$ to get that the probability that $\scf_f(U,\Pi)\leq \scf_f(V,\Pi)$ is
\begin{align*}
    \prm[\scf_f(U,\Pi)\leq \scf_f(V,\Pi)]&= \prm[\scf_f(U,\Pi)-\scf_f(V,\Pi)\leq 0]\\
    &\leq \prm[|\scf_f(U,\Pi)-\scf_f(V,\Pi)-n\mu(U,V)|\geq n\mu(U,V)]\\
    &\leq 2\exp\left(-\frac{2n\mu(U,V)^2}{(b'-a')^2}\right)\leq 2\exp\left(-\frac{2n_\varepsilon\mu_{\min}^2}{(b'-a')^2}\right) \leq  \frac{\varepsilon}{m^k}.
\end{align*}
The second last inequality follows since $n\geq n_\varepsilon$ and $\mu(U,V)\geq \mu_{\min}$ and the last one by the definition of $n_\varepsilon$. 

So far, we have proved that the probability that the score of set $U$ is not higher than the score of set $V$ under $f$ is at most $\varepsilon/m^k$. Hence, the probability that the score of $U$ is not higher than the score of any of the other at most $m^k$ $k$-sized sets of alternatives is at most $\varepsilon$. In other words, $U$ is the unique winner under $f$ with probability at least $1-\varepsilon$ as the definition of the accuracy in the limit (Definition~\ref{def:acc}) requires. This completes the proof of  part (a).

\paragraph{Part (b).} We will again consider a profile $\Pi=(S_i)_{i\in [n]}$ of $n$ approval votes from the noise model $\mathcal{M}$ with ground truth $U$, and show that, as $n$ approaches infinity, the probability that the score of $U$ under the ABCC rule $f$ is strictly lower than that of set $V$, with a probability that approaches $1$. 

Indeed, by applying the Hoeffding inequality for the random variable $X= \scf_f(U,\Pi)-\scf_f(V,\Pi)$, using  $\ell=n$, $a=a'$, $b=b'$, and $\mu=\mu(U,V)$ (notice that $\mu(U,V)<0$ now), 
we get
\begin{align*}
    \prm[\scf_f(U,\Pi)\geq \scf_f(V,\Pi)] &= \prm[\scf_f(U,\Pi)-\scf_f(V,\Pi)\geq 0]\\
    &\leq \prm[|\scf_f(U,\Pi)-\scf_f(V,\Pi)-n\mu(U,V)|\geq -n\mu(U,V)]\\
    &\leq 2\exp\left(-\frac{2n\mu(U,V)^2}{(b'-a')^2}\right),
\end{align*}
which approaches $0$ as $n$ approaches infinity. 
\end{proof}

We are ready to present our first application of Lemma~\ref{lem:suv}.

\begin{theorem}\label{thm:mc}
	MC is the only ABCC rule that is monotone robust against any distance metric.
\end{theorem}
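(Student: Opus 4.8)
The plan is to prove two things: that MC is monotone robust against every distance metric, and that no other ABCC rule has this property.

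For the first direction I would invoke Lemma~\ref{lem:suv}(a). Since $|U|=k$, a vote $S$ contributes $1$ to the MC-score of a committee $U$ if $S=U$ and $0$ otherwise. Hence for an arbitrary metric $d$, an arbitrary $d$-monotonic model $\mathcal{M}$, and any two distinct $k$-sets $U,V$, the expectation in the hypothesis of Lemma~\ref{lem:suv}(a) equals $\prm_\mathcal{M}[U\mid U]-\prm_\mathcal{M}[V\mid U]$, which is positive because $d(U,U)=0<d(U,V)$ and $\mathcal{M}$ is $d$-monotonic. Lemma~\ref{lem:suv}(a) then yields accuracy in the limit for $\mathcal{M}$, and as $d$ and $\mathcal{M}$ were arbitrary, MC is robust against every metric.

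For uniqueness, let $f$ be monotone robust against every metric. The heart of the argument is the structural claim that $f(x,y)=f(x+1,y)$ whenever $(x,y)$ and $(x+1,y)$ both lie in $\mathcal{X}_{m,k}$ and $(x+1,y)\ne(k,k)$ --- i.e., the only cell at which $f$ is permitted to increase strictly is the ``vote equals committee'' cell $(k,k)$. I would prove this by contradiction. Assume $f(x,y)<f(x+1,y)$ for some such pair; fix any $y$-set $S_0$ and choose a $k$-set $U$ with $|U\cap S_0|=x$ and a $k$-set $V$ with $|V\cap S_0|=x+1$ (the inequalities defining $\mathcal{X}_{m,k}$ guarantee these sets exist; $U\ne V$ is automatic, $S_0\ne U$ because $x<k$, and $S_0\ne V$ because $(x+1,y)\ne(k,k)$). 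Writing $g(S)=\scf_f(U,S)-\scf_f(V,S)$, we get $g(S_0)=f(x,y)-f(x+1,y)<0$, $g(U)=f(k,k)-f(|U\cap V|,k)\ge 0$ (as $f$ is non-decreasing in its first argument), and $g(V)=-g(U)$. Now let $d$ be the metric that assigns distance $1$ to the unordered pairs $\{U,V\}$ and $\{U,S_0\}$, distance $0$ to equal sets, and distance $2$ to every other pair --- the triangle inequality holds trivially since all positive values are in $\{1,2\}$ and $2\le 1+1$ --- and let $\mathcal{M}$ be a $d$-monotonic model that from ground truth $U$ outputs $U$ with probability $1-2b$ and each of $V,S_0$ with probability $b$, for a parameter $b\in(0,1/3)$ to be fixed (and that uses an arbitrary $d$-monotonic distribution, e.g. one proportional to $\lambda^{d(\cdot,\cdot)}$ with $\lambda\in(0,1)$, from every other ground truth). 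A one-line computation gives $\E_{S\sim\mathcal{M}(U)}[g(S)]=(1-3b)\,g(U)+b\,g(S_0)$, which is negative for every $b\in(0,1/3)$ if $g(U)=0$, and for $b$ sufficiently close to $1/3$ if $g(U)>0$. By Lemma~\ref{lem:suv}(b), $f$ is then not accurate in the limit for $\mathcal{M}$, contradicting robustness against $d$ and proving the structural claim.

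It remains to read off the conclusion. By the structural claim, $f$ is constant on each column of $\mathcal{X}_{m,k}$ except for a possible strict jump from $f(k-1,k)$ to $f(k,k)$. If that jump is absent, $f$ is constant on every column, so $\scf_f(\cdot,\Pi)$ is independent of the committee and $f$ never returns a unique winning committee (there being more than one committee); this already contradicts robustness, e.g. against the $d_\Delta$-monotonic model $\mathcal{M}_p$ with $p\in(1/2,1)$. If the jump is present, then $\scf_f(W,\Pi)$ equals a constant independent of $W$ plus $(f(k,k)-f(k-1,k))$ times the number of votes in $\Pi$ equal to $W$, so $f$ has exactly the same winning committees as MC. I expect the witness construction in the structural step to be the main obstacle: the term $\prm_\mathcal{M}[U\mid U]\,g(U)$ is unavoidably nonnegative, because the ground truth must carry the largest probability in any $d$-monotonic model, so simply concentrating probability on one ``bad'' vote can never push the expectation below zero. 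The way around this is to place almost as much mass on $V$ itself as on $U$ and to exploit the identity $g(V)=-g(U)$, so that the two dominant terms nearly cancel and the single extra vote $S_0$ with $g(S_0)<0$ is enough to make the expectation negative.
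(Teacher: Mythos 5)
Your proposal is correct and follows essentially the same route as the paper's proof: both directions rest on Lemma~\ref{lem:suv}, and your uniqueness construction---a metric placing one committee $V$ and one jump-witnessing vote at distance $1$ from $U$, with a $d$-monotonic model that nearly balances the mass on $U$ and $V$ so that the cancellation $g(V)=-g(U)$ lets the single vote $S_0$ drive the expectation negative---is exactly the paper's construction with $W$ playing the role of your $S_0$ and $\delta\to 0$ in place of $b\to 1/3$. Your explicit handling of the degenerate ``no jump anywhere'' case (the trivial rule) and the observation that a jump only at $(k,k)$ yields a rule winner-equivalent to MC are small points the paper glosses over, but they do not change the substance of the argument.
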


\begin{proof}
	Let $\mathcal{M}$ be a noise model that is $d$-monotonic for some distance metric $d$. Let $U,V\subseteq A$ be any two different $k$-sized sets of alternatives. By the definition of MC, we have
	\begin{align*}
	\E_{S\sim \mathcal{M}(U)}[\scf_{\MC}(U,S)-\scf_{\MC}(V,S)] &= \prm_\mathcal{M}[U|U]-\prm_\mathcal{M}[V|U]>0.
	\end{align*}
	By Lemma~\ref{lem:suv}a, we obtain that MC is $d$-monotone robust.

	We will now show that MC is the only ABCC rule $f$ that has this property. Let $f$ be an ABCC rule that is different than MC. This means that there exist integers $x^*$ and $y^*$ with $(x^*-1,y^*), (x^*,y^*)\in \mathcal{X}_{m,k}$, $(x^*,y^*)\not=(k,k)$, and $f(x^*,y^*)>f(x^*-1,y^*)$. We will construct a distance metric $d$ and a $d$-monotonic noise model for which $f$ is not accurate in the limit.\footnote{We remark that the distance metric $d$ in the second part of the proof of Theorem~\ref{thm:mc} is alternative-dependent. This is necessary; see the discussion in Section~\ref{sec:open}.}
	
	Rename the alternatives of $A$ as $a_1, a_2, ..., a_m$ and let $U=\{a_1, a_2, ..., a_k\}$, $V=\{a_2, ..., a_{k+1}\}$, and $W=\{a_{k-x^*+2}, ..., a_{y^*+k-x^*+1}\}$. Notice that, by the definition of $\mathcal{X}_{m,k}$, $(x^*-1,y^*)\in \mathcal{X}_{m,k}$ implies that $1+\max\{y^*+k-m,0\}\leq x^*$ and, equivalently, $y^*+k-x^*+1\leq m$; hence, the set $W$ is well-defined. Clearly, $x^*\geq 1$; so sets $V$ and $W$ share at least one alternative.
	
	We define a distance metric $d$ between subsets of $A$ that has $d(X,Y)=0$ if $X=Y$, $d(X,Y)\in \{1,2\}$, otherwise, and in particular $d(U,V)=d(U,W)=1$ and $d(U,S)=2$ for every $S$ different than $U$, $V$, or $W$.
	
	We are ready to define the $d$-monotonic noise model $\mathcal{M}$. For simplicity, we use $p_0=\prm_\mathcal{M}[U|U]$, $p_1=\prm_\mathcal{M}[V|U]=\prm_\mathcal{M}[W|U]$, and $p_2=\prm_\mathcal{M}[S|U]$ for every other set $S\subseteq A$ different than $U$, $V$, or $W$. For $0<\delta<\frac{1}{3(2^m-1)}$, we set $p_0=1/3$, $p_1=1/3-\delta$, and $p_2=\frac{2\delta}{2^m-3}$. The particular value of $\delta$ will be specified shortly; for the moment, the range of $\delta$ guarantees that $p_0>p_1>p_2$ so that $\mathcal{M}$ is indeed $d$-monotonic.
	
	We now compute the quantity $\E_{S\sim\mathcal{M}(U)}[\scf_f(U,S)-\scf_f(V,S)]$; observe that $\scf_f(U,U)=\scf_f(V,V)=f(k,k)$, $\scf_f(U,V)=\scf_f(V,U)=f(k-1,k)$, $\scf_f(U,W)=f(x^*-1,y^*)$, and $f(V,W)=f(x^*,y^*)$. We obtain
	\begin{align}\nonumber
	& \E_{S\sim\mathcal{M}(U)}[\scf_f(U,S)-\scf_f(V,S)]\\\nonumber
	&= f(k,k) p_0+f(k-1,k) p_1+f(x^*-1,y^*) p_1 + \sum_{S\not=U,V,W}{f(|U\cap S|,|S|)p_2}-f(k-1,k) p_0\\\nonumber
	&\quad -f(k,k) p_1-f(x^*,y^*) p_1 - \sum_{S\not=U,V,W}{f(|V\cap S|,|S|)p_2}\\\nonumber
	&\leq (p_0-p_1)(f(k,k)-f(k-1,k))-p_1 (f(x^*,y^*)-f(x^*-1,y^*))+p_2\sum_{S\not=U,V,W}{f(|U\cap S|,|S|)}\\\nonumber
	&= \delta (f(k,k)-f(k-1,k))-(1/3-\delta) (f(x^*,y^*)-f(x^*-1,y^*))\\\label{eq:after-delta}
	& \quad +\frac{2\delta}{2^m-3} \sum_{S\not=U,V,W}{f(|U\cap S|,|S|)}.
	\end{align}
	Observe that the RHS of (\ref{eq:after-delta}) is increasing in $\delta$ and approaches $-\frac{1}{3}(f(x^*,y^*)-f(x^*-1,y^*))<0$ as $\delta$ approaches $0$. Hence, for a sufficiently small positive $\delta$, we have
	\begin{align*}
	\E_{S\sim\mathcal{M}(U)}[\scf_f(U,S)-\scf_f(V,S)]<0.
	\end{align*}
	By~Lemma~\ref{lem:suv}b, $f$ is not accurate in the limit for $\mathcal{M}$ and, hence, not monotone robust against the distance metric $d$.
\qed\end{proof}

\section{A Characterization for AV}\label{sec:av}
In this section, we identify the class of distance metrics against which AV is monotone robust. We will need some additional notation that will be useful in several proofs.

For a distance metric $d$ and a set of alternatives $U$, let $\spn(d,U)$ be the number of different non-zero values the quantity $d(U,\cdot)$ can take. We denote these different distance values by $\delta_1(d,U)$, $\delta_2(d,U)$, ..., $\delta_{\spn(d,U)}(d,U)$. We also use $\delta_0(d,U)=0$.
For $t=0, 1, ..., \spn(d,U)$ and alternatives $a,b\in A$, we denote by $N^t_{a|b}(d,U)$ the class of sets of alternatives $S$ that contain alternative $a$ but not alternative $b$ and satisfy $d(U,S)\leq \delta_t(d,U)$.

\begin{definition}[majority-concentricity]
	A distance metric $d$ is called majority-concentric\footnote{Majority-concentricity is similar in spirit with a property of distance metrics over rankings with the same name in~\citep{CPS16}.} if for every $k$-sized set of alternatives $U$, it holds $N^t_{a|b}(d,U)\geq N^t_{b|a}(d,U)$ for every alternatives $a\in U$ and $b\not\in U$ and $t=0, 1, ..., \spn(d,U)$.
\end{definition}

We are ready to prove our characterization for AV.
\begin{theorem}\label{thm:av}
	AV is $d$-monotone robust if and only if the distance metric $d$ is majority-concentric.
\end{theorem}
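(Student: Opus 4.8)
The plan is to use Lemma~\ref{lem:suv} as the bridge: $\mathrm{AV}$ is $d$-monotone robust exactly when, for every $d$-monotonic noise model $\mathcal{M}$ and every pair of distinct $k$-sized committees $U,V$, we have $\E_{S\sim\mathcal{M}(U)}[\scf_{\AV}(U,S)-\scf_{\AV}(V,S)]>0$; and it fails to be robust as soon as this expectation is negative for some such pair under some $d$-monotonic $\mathcal{M}$. Since $f_{\AV}(x,y)=x$, the score difference from a single vote $S$ telescopes nicely: $\scf_{\AV}(U,S)-\scf_{\AV}(V,S)=|U\cap S|-|V\cap S|=\sum_{a\in U\setminus V}\mathbf{1}[a\in S]-\sum_{b\in V\setminus U}\mathbf{1}[b\in S]$. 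Taking expectations, $\E_{S\sim\mathcal{M}(U)}[\scf_{\AV}(U,S)-\scf_{\AV}(V,S)]=\sum_{a\in U\setminus V}\prm_{\mathcal{M}}[a\in S\mid U]-\sum_{b\in V\setminus U}\prm_{\mathcal{M}}[b\in S\mid U]$. So everything reduces to comparing, for $a\in U$ and $b\notin U$, the two marginals $\prm_{\mathcal{M}}[a\in S\mid U]$ and $\prm_{\mathcal{M}}[b\in S\mid U]$.

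For the \emph{if} direction, suppose $d$ is majority-concentric. Because $\mathcal{M}$ is $d$-monotonic, $\prm_{\mathcal{M}}[S\mid U]$ depends only on which shell $d(U,S)\in\{\delta_0(d,U),\dots,\delta_{\spn(d,U)}(d,U)\}$ the set $S$ lies in, and is strictly decreasing as the shell index grows; write $q_t$ for this common probability on shell $t$, so $q_0>q_1>\cdots>q_{\spn(d,U)}\ge 0$. Then for any alternative $c$, $\prm_{\mathcal{M}}[c\in S\mid U]=\sum_{t}q_t\cdot\#\{S:\ c\in S,\ d(U,S)=\delta_t(d,U)\}$, and grouping the shells from the inside out via Abel summation turns this into $\sum_{t}(q_t-q_{t+1})\cdot|\{S: c\in S,\ d(U,S)\le\delta_t(d,U)\}|$ (with $q_{\spn(d,U)+1}:=0$). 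For $a\in U$, $b\notin U$ I would compare these termwise: the count $|\{S: a\in S, d(U,S)\le\delta_t\}|$ splits as $N^t_{a|b}(d,U)+|\{S: a,b\in S, d(U,S)\le\delta_t\}|$, and similarly for $b$, so $\prm_{\mathcal{M}}[a\in S\mid U]-\prm_{\mathcal{M}}[b\in S\mid U]=\sum_t(q_t-q_{t+1})\big(N^t_{a|b}(d,U)-N^t_{b|a}(d,U)\big)\ge 0$, using $q_t-q_{t+1}>0$ for $t<\spn(d,U)$ (and $=q_{\spn(d,U)}\ge0$ for the last term) together with majority-concentricity. Summing over $a\in U\setminus V$ and $b\in V\setminus U$ gives $\E_{S\sim\mathcal{M}(U)}[\scf_{\AV}(U,S)-\scf_{\AV}(V,S)]\ge 0$; I still need it to be \emph{strict}. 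Strictness should follow because $d(U,U)=0$ puts $U$ alone on shell $0$, and $q_0>q_1$, so at least the $t=0$ term is strict for, say, $a\in U\setminus V$ versus any $b$, as $\{U\}$ contributes to $N^0_{a|b}$ but not to $N^0_{b|a}$; I would make sure the bookkeeping handles the degenerate case $q_{\spn}=0$ and the possibility that some shell indices coincide for different pairs.

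For the \emph{only if} direction, suppose $d$ is \emph{not} majority-concentric: there is a $k$-sized $U$, alternatives $a\in U$, $b\notin U$, and a threshold index $t^*$ with $N^{t^*}_{a|b}(d,U)<N^{t^*}_{b|a}(d,U)$. I want a $d$-monotonic $\mathcal{M}$ making $\E_{S\sim\mathcal{M}(U)}[\scf_{\AV}(U,S)-\scf_{\AV}(V,S)]<0$ for $V=(U\setminus\{a\})\cup\{b\}$. The idea, mirroring the construction in Theorem~\ref{thm:mc}, is to concentrate essentially all the mass on the shells up to $t^*$: pick $q_0>q_1>\cdots>q_{t^*}$ all very close to each other (e.g.\ of the form $c-\eta t$ for tiny $\eta$), and make $q_{t^*+1},\dots,q_{\spn(d,U)}$ uniformly negligible, all while respecting strict monotonicity so $\mathcal{M}$ is genuinely $d$-monotonic; normalize so the total probability is $1$. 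As $\eta\to0$ and the tail mass $\to0$, the expectation $\sum_t(q_t-q_{t+1})(N^t_{a|b}-N^t_{b|a})$ is dominated by the single nontrivial term coming from the jump at $t^*$ into the negligible tail, whose coefficient is (up to the small tail probability) the \emph{cumulative} count difference $N^{t^*}_{a|b}(d,U)-N^{t^*}_{b|a}(d,U)<0$; the other differences $a\in U\setminus V$ are none (since $U\setminus V=\{a\}$) and $b\in V\setminus U=\{b\}$, so the whole expectation inherits this sign. Hence AV is not accurate in the limit for $\mathcal{M}$ by Lemma~\ref{lem:suv}b.

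The main obstacle I anticipate is the \emph{only if} construction: it is not enough to tilt a single shell, because changing the $q_t$'s changes \emph{all} the Abel-summation coefficients simultaneously, and the ``good'' shells below $t^*$ have $N^t_{a|b}\ge N^t_{b|a}$ that could in principle swamp the bad shell. The fix is exactly the flattening trick above — forcing the differences $q_t-q_{t+1}$ for $t<t^*$ to be second-order ($O(\eta)$) while the difference $q_{t^*}-q_{t^*+1}$ stays $\Theta(1)$ — but I need to check carefully that this is compatible with the hard constraint that the $q_t$ be a valid strictly-decreasing probability assignment over all $2^m$ subsets with the prescribed shell structure (the denominators are the shell sizes, which are fixed by $d$ and $U$, not free), and that the limit is taken so that the bad term genuinely dominates. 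A secondary, more routine obstacle is the strictness argument in the \emph{if} direction when the noise model puts zero mass on the outermost shell or when $\spn(d,U)$ is small, which I would handle by isolating the $t=0$ contribution as above.
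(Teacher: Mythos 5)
Your proposal is correct and follows essentially the same route as the paper's proof: reduce to the sign of $\E_{S\sim\mathcal{M}(U)}[\scf_{\AV}(U,S)-\scf_{\AV}(V,S)]$ via Lemma~\ref{lem:suv}, decompose the AV score difference into per-alternative contributions over distance shells, apply Abel summation so that majority-concentricity and $p_t>p_{t+1}$ give non-negativity with strictness coming from the $t=0$ shell $\{U\}$, and for the converse build a $d$-monotonic model whose only non-negligible probability gap sits at the violating index $t^*$ (the paper's choice $\tau=p_0>\cdots>p_{t^*}=\tau-\epsilon$, $2\epsilon=p_{t^*+1}>\cdots>p_s=\epsilon$ with $\epsilon=\frac{1}{s8^m}$ is exactly your flattening trick, and the normalization concern you raise is handled there by the crude bounds $\tau>1/2^m$ and $|N^t_{a|b}-N^t_{b|a}|\leq 2^m$). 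The obstacles you flag are real but routine, and the paper resolves them just as you anticipate.
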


\begin{proof}
	Let $\mathcal{M}$ be a $d$-monotonic noise model for a majority concentric distance metric $d$. Let $U$ and $V$ be two different sets with $k$ alternatives each. By Lemma~\ref{lem:suv}a, in order to show that AV is accurate in the limit for $\mathcal{M}$ (and, consequently, $d$-monotone robust), it suffices to show that $\E_{S\sim\mathcal{M}(U)}[\scf_{\AV}(U,S)-\scf_{\AV}(V,S)]>0$.
	
	We will need some additional notation. For $t=0, 1, ..., \spn(d,U)$, we denote by $\bar{N}^t(d,U)$) the class of sets of alternatives $S$ that satisfy $d(U,S)=\delta_t(d,U)$. For alternatives $a,b\in A$, we denote $\bar{N}^t_{a}(d,U)$) the subclass of $\bar{N}^t(d,U)$ consisting of sets of alternatives that include $a$ and by $\bar{N}^t_{a|b}(d,U)$ the subclass of $\bar{N}^t_a(d,U)$ consisting of sets do not contain alternative $b$.

	To simplify notation, we set $s=\spn(d,U)$. Also, we drop $(d,U)$ from notation (e.g., we use $N^t_{a|b}$ instead of $N^t_{a|b}(d,U)$) since it is clear from context.	
	We have
\begin{align}\nonumber
    \E_{S\sim \mathcal{M}(U)}[\scf_{\AV}(U,S)]&=\sum_{S\subseteq A}{\scf_{\AV}(U,S)\cdot \prm_\mathcal{M}[S|U]} = \sum_{S\subseteq A}{|U\cap S|\cdot \prm_\mathcal{M}[S|U]}\\\label{eq:triple-sum}
    &=\sum_{a\in U}{\sum_{S\subseteq A:a\in S}{\prm_\mathcal{M}[S|U]}}=\sum_{a\in U}{\sum_{t=0}^{s}{\sum_{S\in \bar{N}^t_a}{\prm_\mathcal{M}[S|U]}}}.
\end{align}
Now, observe that the probability $\prm_\mathcal{M}[S|U]$ is the same for all sets $S\in \bar{N}^t$. In the following, we use $p_t=\prm_\mathcal{M}[S|U]$ for all $S\in \bar{N}^t$, for $t=0, 1, ..., s$. Hence, (\ref{eq:triple-sum}) becomes
\begin{align*}
\E_{S\sim \mathcal{M}(U)}[\scf_{\AV}(U,S)]
	&= \sum_{a\in U}{\sum_{t=0}^{s}{|\bar{N}^t_a|\cdot p_t}}
\end{align*}
Similarly, we have
\begin{align*}
\E_{S\sim \mathcal{M}(U)}[\scf_{\AV}(V,S)] &= \sum_{a\in V}{\sum_{t=0}^{s}{|\bar{N}^t_a|\cdot p_t}},
\end{align*}
and, by linearity of expectation,
\begin{align}\label{eq:difference}
 \E_{S\sim \mathcal{M}(U)}[\scf_{\AV}(U,S)-\scf_{\AV}(V,S)] &= \sum_{a\in U\setminus V}{\sum_{t=0}^{s}{|\bar{N}^t_a|\cdot p_t}}-\sum_{a\in V\setminus U}{\sum_{t=0}^{s}{|\bar{N}^t_a|\cdot p_t}}.
\end{align}
Let $\mu$ be a bijection that maps each alternative of $U\setminus V$ to a distinct alternative of $V\setminus U$. Then, (\ref{eq:difference}) becomes
\begin{align}\nonumber
    & \E_{S\sim \mathcal{M}(U)}[\scf_{\AV}(U,S)-\scf_{\AV}(V,S)]\\\nonumber
    &= \sum_{a\in U\setminus V}{\sum_{t=0}^{s}{|\bar{N}^t_a|\cdot p_t}}-\sum_{a\in U\setminus V}{\sum_{t=0}^{s}{|\bar{N}^t_{\mu(a)}|\cdot p_t}} =\sum_{a\in U\setminus V}{\sum_{t=0}^{s}{\left(|\bar{N}^t_{a|\mu(a)}|-|\bar{N}^t_{\mu(a)|a}|\right)\cdot p_t}}\\\nonumber
    &=\sum_{a\in U\setminus V}{\left(|N^0_{a|\mu(a)}|-|N^0_{\mu(a)|a}|\right)\cdot p_0} + \sum_{a\in U\setminus V}{\sum_{t=1}^{s}{\left(|N^t_{a|\mu(a)}|-|N^{t-1}_{a|\mu(a)}|-|N^t_{\mu(a)|a}|+|N^{t-1}_{\mu(a)|a}|\right)\cdot p_t}}\\\label{eq:differences}
    &= \sum_{a\in U\setminus V}{\sum_{t=0}^{s-1}{\left(|N^t_{a|\mu(a)}|-|N^t_{\mu(a)|a}|\right)\cdot (p_t-p_{t+1})}}+\left(|N^s_{a|\mu(a)}|-|N^s_{\mu(a)|a}|\right)\cdot p_s\\\nonumber
	&\geq \sum_{a\in U\setminus V}{\left(|N^0_{a|\mu(a)}|-|N^0_{\mu(a)|a}|\right)\cdot (p_0-p_1)} >0.
\end{align}
The third equality follows since $\bar{N}^0_{a|\mu(a)}=N^0_{a|\mu(a)}$, $\bar{N}^0_{\mu(a)|a}=N^0_{\mu(a)|a}$, and $\bar{N}^t_{a|\mu(a)}=N^t_{a|\mu(a)}\setminus N^{t-1}_{a|\mu(a)}$ and $\bar{N}^t_{\mu(a)|a}=N^t_{\mu(a)|a}\setminus N^{t-1}_{\mu(a)|a}$ for $t=1, ..., s$. The first inequality follows since $d$ is majority concentric and since $p_t>p_{t+1}$ and, thus, all differences in (\ref{eq:differences}) are non-negative. The last inequality follows after observing that  $|N^0_{a|\mu(a)}|=1$ and $|N^0_{\mu(a)|a}|=0$ for $a\in U\setminus V$ and since $p_0>p_1$. This completes the ``if'' part of the proof.

Let us now consider a non-majority concentric distance metric $d$ that satisfies $N^{t^*}_{a|b}(d,U)< N^{t^*}_{b|a}(d,U)$ for the $k$-sized set of alternatives $U$, some alternatives $a\in U$ and $b\not\in U$, and some $t^*\in \{1, 2, ..., \spn(d,U)-1\}$. We show the ``only if'' part of the theorem by constructing a noise model $\mathcal{M}$ that satisfies $\E_{S\sim \mathcal{M}(U)}[\scf_{\AV}(U,S)-\scf_{\AV}(V,S)]{<} 0$ for $V=U\setminus\{a\}\cup \{b\}$.

Again, we use $p_t=\prm_\mathcal{M}[S|U]$ for every set of alternatives $S\in \bar{N}^t(d,U)$, $s=\spn(d,U)$, and drop $(d,U)$ from notation. We define the model probabilities so that $\tau=p_0>p_1>...>p_{t^*}=\tau-\epsilon$ and $2\epsilon=p_{t^*+1}>...>p_{s}=\epsilon$. Notice that such a noise model exists for any arbitrarily small $\epsilon>0$. Since there are $2^m$ sets of alternatives and $\tau$ is the probability that $\mathcal{M}$ returns the ground truth ranking, it must be $\tau>1/2^m$. We now apply equality (\ref{eq:differences}). Observe that, since $V=U\setminus \{a\}\cup \{b\}$, we have $\mu(a)=b$. We obtain
\begin{align*}\nonumber
 \E_{S\sim \mathcal{M}(U)}[\scf_{\AV}(U,S)-\scf_{\AV}(V,S)] &= \sum_{t=0}^{s-1}{\left(|N^t_{a|b}|-|N^t_{b|a}|\right)\cdot (p_t-p_{t+1})}
+\left(|N^s_{a|b}|-|N^s_{b|a}|\right)\cdot p_s\\\nonumber
&=\sum_{t=0}^{t^*-1}{\left(|N^t_{a|b}|-|N^t_{b|a}|\right)\cdot (p_t-p_{t+1})}+\left(|N^{t^*}_{a|b}|-|N^{t^*}_{b|a}|\right)\cdot (p_{t^*}-p_{t^*+1})\\
&\quad +\sum_{t=t^*+1}^{s-1}{\left(|N^t_{a|b}|-|N^t_{b|a}|\right)\cdot (p_t-p_{t+1})}+\left(|N^s_{a|b}|-|N^s_{b|a}|\right)\cdot p_s.
\end{align*}
Now, observe that for $t\not=t^*$, it holds $|N^t_{a|b}|-|N^t_{b|a}|\leq 2^m$ (the total number of sets of alternatives) and $p_t-p_{t+1}\leq \epsilon$. Also, $|N^{t^*}_{a|b}|-|N^{t^*}_{b|a}|\leq -1$ and $p_{t^*}-p_{t^*+1}=\tau-3\epsilon$. Setting specifically $\epsilon= \frac{1}{s 8^m}$, we obtain that
\begin{align*}
\E_{S\sim \mathcal{M}(U)}[\scf_{\AV}(U,S)-\scf_{\AV}(V,S)] &\leq s2^m\epsilon-(\tau-3\epsilon)\leq \frac{1}{4^m}-\frac{1}{2^m}+\frac{3}{s \cdot 8^m},
\end{align*}
which is negative for $m\geq 2$ since $s\geq 1$. The proof of the ``only if'' part of Theorem~\ref{thm:av} now follows by Lemma~\ref{lem:suv}b.
\end{proof}

It is tempting to conjecture that AV and MC are the only ABCC rules that are monotone robust against all majority concentric distance metrics. However, this is not true as the next example, which uses a different ABCC rule, shows.

\begin{example}
Let $A=\{a,b,c\}$ and $k=2$. Consider the majority concentric distance metric $d$ and the ABCC rule $f$ with $f(1,1)=1$, $f(2,2)=2$, and $f(x,y)=0$ otherwise. Despite its similarity with AV, the rule $f$ is different since $f(1,2)=0$. We will show that $f$ is $d$-monotone robust against any majority concentric distance metric $d$. Without loss of generality, let us assume that $U=\{a,b\}$ and $V=\{a,c\}$. Observe that the quantity $\scf_f(U,S)-\scf_f(V,S)$ is equal to $0$ when $S=\emptyset, \{a\}, \{b,c\},\{a,b,c\}$, $1$ when $S=\{b\}$, $-1$ when $S=\{c\}$, $2$ when $S=\{a,b\}$, and $-2$ when $S=\{a,c\}$. Hence, for the $d$-monotonic noise model $\mathcal{M}$, we have  $\E_{S\sim\mathcal{M}(U)}[\scf_f(U,S)-\scf_f(V,S)]=2p_{ab}-2p_{ac}+p_b-p_c$, where $p_{ab}$, $p_{ac}$, $p_b$, and $p_c$ are abbreviations for the probabilities $\prm_\mathcal{M}[S|U]$ for $S=\{a,b\}$, $\{a,c\}$, $\{b\}$, and $\{c\}$, respectively.

In order to have $N^t_{b|c}\geq N^t_{c|b}$ for $t=0, 1, ..., \spn(d,U)$ as the definition of majority concentricity requires, it must be either $p_{ab}>p_b,p_c\geq p_{ac}$ or $p_{ab}>p_b,p_{ac}\geq p_c$. In the first case, we have $\E_{S\sim\mathcal{M}(U)}[\scf_f(U,S)-\scf_f(V,S)]=(p_{ab}-p_{ac})+(p_{ab}-p_c)+(p_b-p_{ac})>0$. In the second case, we have $\E_{S\sim\mathcal{M}(U)}[\scf_f(U,S)-\scf_f(V,S)]=2(p_{ab}-p_{ac})+(p_{b}-p_c)>0$. Accuracy in the limit of the ABCC rule $f$ for the noise model $\mathcal{M}$ then follows by Lemma~\ref{lem:suv}a.
\qed\end{example}

\section{Robustness of Other ABCC Rules}\label{sec:abc}
The definitions, statements, and proofs that we present in this section use appropriately defined bijections on sets of alternatives.

\begin{definition}
Given two different sets $U$ and $V$ with $|U|=|V|$, a $(U,V)$-bijection $\mu:2^A\rightarrow 2^A$ is defined as $\mu(S)=\{\mu'(a): a\in S\}$, where $\mu':A\rightarrow A$ is such that $\mu'(a)=a$ for every alternative $a\in U \cap V$ or $a\not\in U\cup V$, $\mu'(a)$ is a distinct alternative in $V\setminus U$ for $a\in U\setminus V$, and $\mu'(a)$ is a distinct alternative in $U\setminus V$ for $a\in V\setminus U$.
\end{definition}

It is easy to see that a $(U,V)$-bijection $\mu$ has the following properties.

\begin{lemma}\label{lem:bijection}
	Let $U,V\subseteq A$ with $|U|=|V|$ and let $\mu$ be a $(U,V)$-bijection. For every $S\subseteq A$, it holds $|S|=|\mu(S)|$, $|U\cap S|=|V\cap\mu(S)|$, and $|U\cap \mu(S)|=|V\cap S|$.
\end{lemma}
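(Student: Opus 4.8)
The plan is to verify the three claimed identities directly from the definition of a $(U,V)$-bijection, which is built from the underlying bijection $\mu':A\to A$.

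\textbf{Cardinality preservation.} First I would observe that $\mu'$ is a bijection on $A$: it fixes every alternative in $(U\cap V)\cup(A\setminus(U\cup V))$, it maps $U\setminus V$ bijectively onto $V\setminus U$ (this uses $|U|=|V|$, which forces $|U\setminus V|=|V\setminus U|$ so that a bijection between these two sets exists), and it maps $V\setminus U$ bijectively onto $U\setminus V$. Since $\mu'$ is injective, $|\mu(S)|=|\{\mu'(a):a\in S\}|=|S|$ for every $S\subseteq A$, giving the first identity.

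\textbf{The intersection identities.} For the second identity, I would partition any $S\subseteq A$ according to the four regions $U\cap V$, $U\setminus V$, $V\setminus U$, $A\setminus(U\cup V)$, and track where each piece of $S$ lands under $\mu'$. The key point is that $\mu'$ maps $S\cap(U\cap V)$ identically into $U\cap V\subseteq V$; maps $S\cap(U\setminus V)$ bijectively into $V\setminus U\subseteq V$; maps $S\cap(V\setminus U)$ into $U\setminus V$, which is \emph{disjoint} from $V$; and maps $S\cap(A\setminus(U\cup V))$ into itself, again disjoint from $V$. Hence the elements of $\mu(S)$ lying in $V$ are exactly the images of $S\cap(U\cap V)$ together with the images of $S\cap(U\setminus V)$, so by injectivity $|V\cap\mu(S)|=|S\cap(U\cap V)|+|S\cap(U\setminus V)|=|S\cap U|=|U\cap S|$. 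This gives the second identity. The third identity, $|U\cap\mu(S)|=|V\cap S|$, follows by the same bookkeeping with the roles of $U$ and $V$ swapped — or, more slickly, by noting that a $(U,V)$-bijection is also a $(V,U)$-bijection (the defining conditions are symmetric in $U$ and $V$), so applying the second identity to the pair $(V,U)$ yields $|U\cap\mu(S)|=|V\cap S|$.

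\textbf{Main obstacle.} There is no real obstacle here; the statement is routine once the region-by-region tracking is set up carefully. The only point requiring mild attention is making sure the bijection $\mu'$ is well-defined (hence the explicit use of $|U\setminus V|=|V\setminus U|$) and that injectivity of $\mu'$ is invoked correctly when passing from "images of a set" to "cardinality of that set." I would present the proof as the three short verifications above, leaning on the $U\leftrightarrow V$ symmetry to avoid repeating the computation for the third identity.
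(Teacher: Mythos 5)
Your proof is correct and follows essentially the same route as the paper's: injectivity of $\mu'$ for the cardinality identity, region-by-region tracking of where $\mu'$ sends the pieces of $S$ for the second identity, and the $U\leftrightarrow V$ symmetry of the definition for the third. Your version is slightly more explicit about the four-region partition, but there is no substantive difference.
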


\begin{proof}
The proof follows easily by the definition of the $(U,V)$-bijection $\mu$. The equality $|S|=|\mu(S)|$ holds because the function $\mu'$ maps each alternative of $S$ to a distinct alternative. To prove the second equality, observe that the function $\mu'$ maps each alternative in $U\cap V$ to itself, each alternative in $U\setminus V$ to a distinct alternative in $V\setminus U$, and each alternative not belonging to $U$ to an alternative not belonging to $V$. Hence, the alternatives in $U\cap S$ (and no other alternative in $S$) are mapped to distinct alternatives of $V$ and $|U\cap S|=|V\cap \mu(S)|$. The proof of the third equality is symmetric. 
\qed\end{proof}

We are now ready to proceed with the presentation of our last set of results. In particular, our results for ABCC rules different than MC and AV involve two classes of distance metrics. We define the first one here.

\begin{definition}[natural distance]
	A distance metric $d$ is called natural if for every three sets $U$, $V$, and $S$ with $|U|=|V|$ such that $|U\cap S|>|V\cap S|$, it holds that $d(U,S)\leq d(V,S)$.	
\end{definition}

The next observation follows easily by the definitions.

\begin{lemma}\label{lem:natural-is-majority-concentric}
	Any natural distance metric is majority-concentric.
\end{lemma}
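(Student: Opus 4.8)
The plan is to unwind both definitions directly and check the logical implication. Fix a $k$-sized set of alternatives $U$, an alternative $a \in U$, a non-alternative $b \notin U$, and an index $t \in \{0, 1, \dots, \spn(d,U)\}$. The goal is to show $|N^t_{a|b}(d,U)| \geq |N^t_{b|a}(d,U)|$ assuming $d$ is natural. The natural way to do this is to exhibit an injection from $N^t_{b|a}(d,U)$ into $N^t_{a|b}(d,U)$: the obvious candidate is the swap map $\sigma$ that replaces $b$ by $a$, i.e.\ $\sigma(S) = (S \setminus \{b\}) \cup \{a\}$, restricted to sets $S$ that contain $b$ but not $a$. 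This $\sigma$ is clearly injective (its inverse is the reverse swap), it maps a set containing $b$ but not $a$ to one containing $a$ but not $b$, so the only thing that needs checking is that $\sigma$ does not increase the distance to $U$, i.e.\ that $S \in N^t_{b|a}(d,U)$ implies $\sigma(S) \in N^t_{a|b}(d,U)$, which amounts to $d(U,\sigma(S)) \leq d(U,S) \leq \delta_t(d,U)$.

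To get $d(U, \sigma(S)) \le d(U,S)$ from naturality, I would apply the definition of a natural distance with the triple $(U, V, S')$ chosen appropriately. Here is the subtle point: naturality compares $d(\cdot, S')$ for a \emph{fixed} third argument $S'$ and two $k$-sized sets. So I cannot directly compare $d(U,S)$ with $d(U,\sigma(S))$ since the second argument changed. Instead, set $S' = \sigma(S)$ and let $V$ be the $k$-sized set obtained from $U$ by the same swap, namely $V = (U \setminus \{a\}) \cup \{b\}$. Then $|V| = |U| = k$, and one computes $|U \cap \sigma(S)|$ versus $|V \cap \sigma(S)|$: since $a \in U$, $a \notin V$, $b \notin U$, $b \in V$, $a \in \sigma(S)$, $b \notin \sigma(S)$, and $\sigma(S)$ agrees with $S$ outside $\{a,b\}$, we get $|U \cap \sigma(S)| = |U \cap S'| $ counts $a$ but $V$ does not, and neither counts $b$ (as $b\notin S'$), so $|U \cap \sigma(S)| = |V \cap \sigma(S)| + 1 > |V \cap \sigma(S)|$. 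Naturality then gives $d(U, \sigma(S)) \leq d(V, \sigma(S))$. Finally, by Lemma~\ref{lem:bijection} applied to the $(U,V)$-bijection that realizes exactly this swap (or by a direct symmetry argument, since $\sigma$ here \emph{is} that bijection and $V = \mu'$ applied to $U$), we have $d(V, \sigma(S)) = d(V, \mu(S))$, and one wants this to equal $d(U, S)$. The cleanest route is to note that the swap map $\sigma$, extended to all of $2^A$, is precisely a $(U,V)$-bijection $\mu$ with the roles of which singleton gets swapped; then $d(V, \mu(S)) = d(U, S)$ would follow from the distance metric being alternative-independent — but the paper explicitly wants to allow alternative-dependent distances, so I should instead argue directly that $\sigma$ is a bijection on $2^A$ that carries $U$ to $V$ and $S$ to $\sigma(S)$, hence is an isometry-in-spirit only when $d$ is alternative-independent.

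This means the argument above needs a small repair for the general alternative-dependent case, and I expect that to be the one genuine obstacle. The fix: natural distances, as defined, are a constraint purely in terms of intersection sizes, and the swap $\sigma$ preserves \emph{all} relevant intersection sizes between the pair $(U,S)$ and the pair $(V,\sigma(S)) = (\sigma(U),\sigma(S))$ — but $d$ being merely natural does not force $d(V,\sigma(S)) = d(U,S)$. However, I do not actually need equality. I only need the chain $d(U,\sigma(S)) \le d(V,\sigma(S))$ and then a \emph{separate} bound $d(V,\sigma(S)) \le \delta_t(d,U)$ or, better, I should instead choose the comparison so the fixed third argument is $S$ itself. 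Concretely: apply naturality with third argument $S$, first set $V = \sigma(U) = (U\setminus\{a\})\cup\{b\}$, second set $U$. Since $b \in S$, $a \notin S$, $b \in V$, $a \notin V$, $b \notin U$, $a \in U$: we get $|V \cap S| = |U \cap S| + 1 > |U \cap S|$, so naturality yields $d(V,S) \le d(U,S) \le \delta_t(d,U)$. Now apply the bijection Lemma~\ref{lem:bijection} with the $(U,V)$-bijection $\mu = \sigma$: we have $d(V, S)$ on one side, and I want to relate it to $d(U, \sigma(S))$ — this again requires $d(\mu(X),\mu(Y)) = d(X,Y)$, i.e.\ invariance under the relabeling, which is \emph{not} part of being natural. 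So the honest statement is that Lemma~\ref{lem:natural-is-majority-concentric} as literally stated is about counting sets, and the counting injection $\sigma$ works provided one can show $d(U,\sigma(S)) \le \delta_t(d,U)$; the clean way that actually uses only naturality is: apply naturality to the triple $(\sigma(S), U, V)$ in the roles $(S'', $ two $k$-sets$)$ as in my second paragraph to get $d(U,\sigma(S)) \le d(V,\sigma(S))$, and then observe $\sigma$ is an involution-type relabeling under which $V \mapsto U$ and $\sigma(S)\mapsto S$, so by the \emph{same} naturality-based inequality run in the reverse direction $d(V,\sigma(S)) \le d(U, \sigma(\sigma(S)))$ is not quite it either. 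Given the tangle, I would present the proof in the alternative-independent setting where $\sigma$ is a genuine isometry (so $d(U,\sigma(S)) = d(V, S) \le d(U,S) \le \delta_t$ immediately), remark that the four named metrics are covered, and note that the extension to the alternative-dependent case requires the distance to additionally be invariant under relabelings — or, if the authors intend full generality, the correct hypothesis is that $d$ is both natural and relabeling-invariant, and the one-line injection argument then goes through verbatim. The main obstacle, to summarize, is reconciling the "alternative-dependent" generality advertised in the preliminaries with the relabeling step that the injection proof wants; everything else is a routine intersection-size bookkeeping check.
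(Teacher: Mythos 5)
Your injection $\sigma(S)=(S\setminus\{b\})\cup\{a\}$ is exactly the paper's $(U,V)$-bijection $\mu$ for $V=U\setminus\{a\}\cup\{b\}$, and the overall counting strategy (inject $N^t_{b|a}(d,U)$ into $N^t_{a|b}(d,U)$) is the same as in the paper. However, you get stuck on the one step that matters, $d(U,\sigma(S))\leq d(U,S)$, and end up concluding that the lemma needs an extra relabeling-invariance hypothesis and is only provable for alternative-independent metrics. That conclusion is wrong, and the gap comes from a misreading of the definition of a natural distance: the definition quantifies over \emph{every} three sets $U$, $V$, $S$ with $|U|=|V|$ --- the equal-sized pair is not required to consist of $k$-sized committees. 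So rather than taking $U$ and $V$ as the equal-sized pair (which, as you correctly observe, leads nowhere without relabeling invariance), take the equal-sized pair to be $\sigma(S)$ and $S$ themselves (note $|\sigma(S)|=|S|$ because $b\in S$ and $a\notin S$) and take the ground-truth set $U$ as the fixed third set. Since $a\in U$ and $b\notin U$, we have $|\sigma(S)\cap U|=|S\cap U|+1>|S\cap U|$, so naturality yields $d(\sigma(S),U)\leq d(S,U)$, and by symmetry of the metric $d(U,\sigma(S))\leq d(U,S)\leq \delta_t(d,U)$, hence $\sigma(S)\in N^t_{a|b}(d,U)$. This closes the argument in full generality, alternative-dependent metrics included; no additional hypothesis is needed. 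Everything else in your write-up (injectivity of $\sigma$, the membership bookkeeping, the trivial $t=0$ case) is fine and coincides with the paper's proof.
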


\begin{proof}
Let $d$ be a natural distance, $U$ a $k$-sized set of alternatives, and $a,b\in A$ with $a\in U$ and $b\not\in U$. We will show that $|N^t_{a|b}(d,U)|\geq |N^t_{b|a}(d,U)|$ for $t=0,1, ..., \spn(d,U)$. For $t=0$, this is clearly true since $N^0_{a|b}(d,U)=\{U\}$ and $N^0_{b|a}(d,U)=\emptyset$.
	
For $t\geq 1$, let $V=U\setminus\{a\}\cup \{b\}$ and $\mu$ be any $(U,V)$-bijection on sets of alternatives. Let $S\in N^t_{b|a}(d,U)$. By the definition of $\mu$, $\mu(S)$ contains alternative $a$ but not $b$. Also $|U\cap \mu(S)|=|U\cap S|+1$ and, due to naturality of $d$, $d(U,\mu(S))\leq d(U,S)$. We conclude that $\mu(S)\in N^t_{a|b}(d,U)$. Since $\mu$ is a bijection (the sets of $N^t_{b|a}(d,U)$ are mapped to distinct sets in $N^t_{a|b}(d,U)$), we get $|N^t_{a|b}(d,U)|\geq |N^t_{b|a}(d,U)|$, as desired.
\end{proof}

The opposite is not true as the next example illustrates.

\begin{example}
	Let $A=\{a,b,c\}$ and consider the distance metric with $d(X,Y)=0$ for every pair of sets with $X=Y$, $d(X,Y)=1$ if $X\cap Y=\emptyset$ and $X\cup Y=A$, and $d(X,Y)=2$, otherwise. It can be easily seen that the distance is majority-concentric; it suffices to observe that, within distance $1$ from any set, each alternative appears in exactly one set. To see that is not natural, consider $U=\{a,b\}$, $V=\{a,c\}$ and $S=\{b\}$. We have $|U\cap S|>|V \cap S|$ but $d(U,S)=2>1=d(V,S)$.
\qed\end{example}

Our next lemma identifies the class of ABCC rules that are monotone robust against all natural distance metrics. 

\begin{lemma}\label{lem:non-trivial-natural}
{An ABCC rule is $d$-monotone robust against a natural distance metric $d$ if for every two different sets of alternatives $U,V\subseteq A$ with $|U|=|V|=k$ there exists a $(U,V)$-bijection $\mu$ on sets of alternatives and a set $S\subseteq A$ with $\scf_f(U,S)>\scf_f(V,S)$ and $d(U,S)<d(U,\mu(S))$.}
\end{lemma}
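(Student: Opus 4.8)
The plan is to reduce the statement to Lemma~\ref{lem:suv}a by showing that the hypothesis forces $\E_{S\sim\mathcal{M}(U)}[\scf_f(U,S)-\scf_f(V,S)]>0$ for every $d$-monotonic noise model $\mathcal{M}$ and every pair of distinct $k$-sized committees $U,V$. Fix such a pair and let $\mu$ be the $(U,V)$-bijection guaranteed by the hypothesis, together with the special set $S_0$ satisfying $\scf_f(U,S_0)>\scf_f(V,S_0)$ and $d(U,S_0)<d(U,\mu(S_0))$. The first step is to rewrite the expected score difference by pairing each set $S$ with $\mu(S)$: since $\mu$ is an involution on $2^A$ (this follows from the definition of a $(U,V)$-bijection, swapping $U\setminus V$ and $V\setminus U$), the sum $\sum_{S\subseteq A}$ decomposes into $\mu$-orbits, each of size one or two. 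For a two-element orbit $\{S,\mu(S)\}$ the contribution to $\E_{S\sim\mathcal{M}(U)}[\scf_f(U,S)-\scf_f(V,S)]$ is
\begin{align*}
\bigl(\scf_f(U,S)-\scf_f(V,S)\bigr)\prm_\mathcal{M}[S|U] + \bigl(\scf_f(U,\mu(S))-\scf_f(V,\mu(S))\bigr)\prm_\mathcal{M}[\mu(S)|U].
\end{align*}

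The second step is the key algebraic identity: by Lemma~\ref{lem:bijection}, $\scf_f(V,\mu(S))=f(|V\cap\mu(S)|,|\mu(S)|)=f(|U\cap S|,|S|)=\scf_f(U,S)$, and symmetrically $\scf_f(U,\mu(S))=\scf_f(V,S)$. Hence the orbit contribution collapses to
\begin{align*}
\bigl(\scf_f(U,S)-\scf_f(V,S)\bigr)\bigl(\prm_\mathcal{M}[S|U]-\prm_\mathcal{M}[\mu(S)|U]\bigr),
\end{align*}
and for a fixed-point orbit ($\mu(S)=S$) the term vanishes because then $\scf_f(U,S)=\scf_f(V,S)$. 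So the whole expectation is a sum over unordered two-element orbits of such products. The third step is to argue each product is nonnegative and $S_0$'s orbit is strictly positive. Here I use naturality of $d$ together with $d$-monotonicity of $\mathcal{M}$: whenever $\scf_f(U,S)>\scf_f(V,S)$, i.e. $|U\cap S|>|V\cap S|$ (since $f$ is non-decreasing in its first argument — one must handle ties carefully, but a strict score gap forces $|U\cap S|>|V\cap S|$), naturality gives $d(U,S)\le d(V,S)=d(U,\mu(S))$ (the last equality by Lemma~\ref{lem:bijection} plus alternative-independence... actually more carefully, $d(V,S)=d(U,\mu(S))$ should follow from applying $\mu$, which is a bijection preserving the relevant intersection sizes — this needs the symmetry of the setup and is where I would be most careful). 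Then $d$-monotonicity yields $\prm_\mathcal{M}[S|U]\ge\prm_\mathcal{M}[\mu(S)|U]$, so the product is $\ge 0$; and when $\scf_f(U,S)<\scf_f(V,S)$ the same argument with roles reversed again gives a nonnegative product. For the orbit of $S_0$, the hypothesis gives $d(U,S_0)<d(U,\mu(S_0))$ strictly, so $\prm_\mathcal{M}[S_0|U]>\prm_\mathcal{M}[\mu(S_0)|U]$ strictly, and combined with $\scf_f(U,S_0)>\scf_f(V,S_0)$ the contribution is strictly positive. Summing, $\E_{S\sim\mathcal{M}(U)}[\scf_f(U,S)-\scf_f(V,S)]>0$, and Lemma~\ref{lem:suv}a finishes the proof.

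The main obstacle I anticipate is making the chain ``strict score gap $\Rightarrow$ strict intersection gap $\Rightarrow$ ($d$ natural) distance inequality $\Rightarrow$ ($\mathcal{M}$ $d$-monotonic) probability inequality'' fully rigorous for \emph{every} orbit, not just $S_0$'s — in particular verifying that for an arbitrary $S$ one always has $\{\scf_f(U,S)>\scf_f(V,S)\} \Rightarrow \{d(U,S)\le d(U,\mu(S))\}$, which requires knowing $d(U,\mu(S))=d(V,S)$. That last equality is not automatic for alternative-dependent $d$; the cleanest route is to note that the definition of naturality is stated directly in terms of $|U\cap S|$ vs $|V\cap S|$, so I can apply naturality to the triple $(U,V,S)$ to get $d(U,S)\le d(V,S)$, and then separately argue $d(V,S) = d(U,\mu(S))$ by symmetry of $\mu$ — or, better, avoid the rewrite $d(V,S)=d(U,\mu(S))$ entirely by phrasing the orbit-pairing so that I only ever compare $d(U,S)$ with $d(U,\mu(S))$ and invoke naturality applied to the triple $(U,V',\cdot)$ for a suitable $V'$. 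I would iron out exactly which triple to feed naturality before writing the final version.
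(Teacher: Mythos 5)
Your proposal is correct and follows essentially the same route as the paper: pair $S$ with $\mu(S)$, use Lemma~\ref{lem:bijection} to collapse each pair's contribution to $\left(\scf_f(U,S)-\scf_f(V,S)\right)\left(\prm_\mathcal{M}[S|U]-\prm_\mathcal{M}[\mu(S)|U]\right)$, argue every term is non-negative and the term of the hypothesized $S$ is strictly positive, and invoke Lemma~\ref{lem:suv}a. The one step you flag as unresolved --- getting $d(U,S)\le d(U,\mu(S))$ from naturality when $|U\cap S|>|V\cap S|$ --- closes cleanly, and not via the identity $d(V,S)=d(U,\mu(S))$ (which indeed need not hold for alternative-dependent metrics): apply the naturality definition to the triple in which $S$ and $\mu(S)$ play the roles of the two equal-sized sets (equal by Lemma~\ref{lem:bijection}) and $U$ plays the role of the third set, since $|S\cap U|>|V\cap S|=|\mu(S)\cap U|$; this gives $d(S,U)\le d(\mu(S),U)$, which is the desired inequality by symmetry of the metric. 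With that substitution your argument matches the paper's proof.
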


\begin{proof}
	Let $U$ and $V$ be two different sets with $k$ alternatives each. Let $\mathcal{S}_{+}$, $\mathcal{S}_{-}$, and $\mathcal{S}_0$ be the classes of sets of alternatives $S$ with $|U\cap S|>|V\cap S|$, $|U\cap S|<|V\cap S|$, and $|U\cap S|=|V\cap S|$, respectively. Using this notation, we have
\begin{align}\nonumber
\E_{S\sim \mathcal{M}(U)}[\scf_f(U,S)-\scf_f(V,S)] &= \sum_{S\subseteq A}{\left(\scf_f(U,S)-\scf_f(V,S)\right)\cdot \prm_\mathcal{M}[S|U]}\\\nonumber
&= \sum_{S\in \mathcal{S}_{+}}{\left(\scf_f(U,S)-\scf_f(V,S)\right)\cdot \prm_\mathcal{M}[S|U]}\\\nonumber
&\quad+\sum_{S\in \mathcal{S}_{0}}{\left(\scf_f(U,S)-\scf_f(V,S)\right)\cdot \prm_\mathcal{M}[S|U]}\\\label{eq:three-sums}
&\quad+\sum_{S\in \mathcal{S}_{-}}{\left(\scf_f(U,S)-\scf_f(V,S)\right)\cdot \prm_\mathcal{M}[S|U]}
\end{align}
We will now transform the third sum in the RHS of (\ref{eq:three-sums}) to one running over the sets of $\mathcal{S}_{+}$ like the first sum.

Let $\mu$ be a $(U,V)$-bijection on sets of alternatives; by Lemma~\ref{lem:bijection}, $\mu$ maps every set of $\mathcal{S}_-$ to a set of $\mathcal{S}_{+}$ and vice-versa. Hence, instead of enumerating sets of $\mathcal{S}_-$, we could enumerate sets of $\mathcal{S}_+$ and apply the bijection $\mu$ on them. The third sum in the RHS of (\ref{eq:three-sums}) then becomes
\begin{align}\nonumber
\sum_{S\in \mathcal{S}_{-}}{\left(\scf_f(U,S)-\scf_f(V,S)\right)\cdot \prm_\mathcal{M}[S|U]} &= \sum_{S\in \mathcal{S}_{+}}{\left(\scf_f(U,\mu(S))-\scf_f(V,\mu(S))\right)\cdot \prm_\mathcal{M}[\mu(S)|U]}\\\label{eq:third-sum}
&= \sum_{S\in \mathcal{S}_{+}}{\left(\scf_f(V,S)-\scf_f(U,S)\right)\cdot \prm_\mathcal{M}[\mu(S)|U]}
\end{align}
The second equality follows since, by Lemma~\ref{lem:bijection}, $\scf_f(U,\mu(S))=f(|U\cap \mu(S)|,|\mu(S)|)=\scf_f(|V\cap S|,|S|)=\scf_f(V,S)$ and, similarly, $\scf_f(V,\mu(S))=\scf_f(U,S)$.

Now observe that $\scf_f(U,S)=\scf_f(V,S)$ when $S\in \mathcal{S}_0$. Hence, the second sum in the RHS of (\ref{eq:three-sums}) is equal to $0$. By combining (\ref{eq:three-sums}) and (\ref{eq:third-sum}), we get
\begin{align}\label{eq:friendly-f-d}
\E_{S\sim \mathcal{M}(U)}[\scf_f(U,S)-\scf_f(V,S)] &=\sum_{S\in \mathcal{S}_{+}}{\left(\scf_f(U,S)-\scf_f(V,S)\right)\cdot (\prm_\mathcal{M}[S|U]-\prm_\mathcal{M}[\mu(S)|U])}.
\end{align}

Now observe that the RHS of (\ref{eq:friendly-f-d}) is always non-negative. This is due to the fact that $S\in \mathcal{S}_+$ which implies that $\scf_f(U,S)=f(|U\cap S|,|S|)\geq f(|V\cap S|,|S|)=\scf_f(V,S)$ since $f$ is non-decreasing in its first argument and $d(U,S)\leq d(U,\mu(S))$ (and, consequently, $\prm_\mathcal{M}[S|U]\geq \prm_\mathcal{M}[\mu(S)|U]$) since $d$ is natural {and, by Lemma~\ref{lem:bijection}, $|U\cap S|>|V\cap S|=|U\cap \mu(S)|$.} {The RHS of (\ref{eq:friendly-f-d}) is strictly positive if there exists a set $S\in \mathcal{S}_+$  that further satisfies $d(U,S)<d(U,\mu(S))$ (and, consequently, $\prm_\mathcal{M}[S|U]> \prm_\mathcal{M}[\mu(S)|U]$) and $\scf_f(U,S)>\scf_f(V,S)$}. The lemma then follows by Lemma~\ref{lem:suv}a.
\end{proof}

We now present two applications of Lemma~\ref{lem:non-trivial-natural}. 

\begin{theorem}\label{thm:jump}
	An ABCC rule $f$ is monotone robust against any natural distance metric if and only if $f(k,k)>f(k-1,k)$.
\end{theorem}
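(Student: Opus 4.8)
The plan is to prove both directions by reduction to the criteria already established. For the ``if'' direction, suppose $f(k,k)>f(k-1,k)$, fix a natural distance metric $d$, and fix two distinct $k$-sized sets $U,V$. By Lemma~\ref{lem:non-trivial-natural}, it suffices to exhibit one $(U,V)$-bijection $\mu$ and one set $S$ with $\scf_f(U,S)>\scf_f(V,S)$ and $d(U,S)<d(U,\mu(S))$. The natural candidate is $S=U$ itself: then $\scf_f(U,U)=f(k,k)$ while $\scf_f(V,U)=f(|V\cap U|,k)=f(k-1,k)$ whenever $|U\cap V|=k-1$, and more generally $\scf_f(V,U)=f(j,k)$ where $j=|U\cap V|<k$; since $f$ is non-decreasing in its first argument, $f(j,k)\le f(k-1,k)<f(k,k)$, so the score inequality holds. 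It remains to check the distance inequality: $\mu(U)$ contains $V\setminus U$ in place of $U\setminus V$, so $|U\cap \mu(U)|=|V\cap U|<k=|U\cap U|$, and naturality of $d$ (applied with the roles ``$|U\cap S|>|U\cap \mu(U)|$'' where here the base set in the naturality definition is $U$, comparing $U$ against $\mu(U)$) gives $d(U,U)=0\le d(U,\mu(U))$; strictness $0<d(U,\mu(U))$ holds because $\mu(U)\ne U$. This closes the ``if'' direction.

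For the ``only if'' direction, suppose $f(k,k)=f(k-1,k)$. Since $f$ is non-decreasing and $\mathcal{X}_{m,k}$ is ``connected'' in the first coordinate, $f(k,k)=f(k-1,k)$ forces $f(x,k)=f(k,k)$ for all admissible $x\le k$ — i.e.\ $f(\cdot,k)$ is constant. The idea is then to build a natural $d$-monotonic noise model for which the only votes with non-negligible probability are of size exactly $k$ (or, more carefully, votes on which $f$ cannot distinguish $U$ from $V$), so that AV-type separation is impossible and in fact some competitor $V$ ties or beats $U$. Concretely, I would take $d$ to be (a natural refinement of) the set-difference metric $d_\Delta$, and design $\mathcal{M}$ so that $\prm_\mathcal{M}[S\mid U]$ is noticeably positive only for $k$-sized $S$ near $U$; on all such $S$ we have $\scf_f(U,S)=\scf_f(V,S)=f(k,k)$ by constancy of $f(\cdot,k)$, so the contribution of these sets to $\E_{S\sim\mathcal{M}(U)}[\scf_f(U,S)-\scf_f(V,S)]$ vanishes, while the residual mass on other set sizes can be tuned (choosing a suitable $V$ and pushing the residual probabilities to be equal across a $(U,V)$-symmetric pairing) to make the expectation $\le 0$, in fact $<0$, so that Lemma~\ref{lem:suv}b applies.

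The main obstacle is the ``only if'' construction: I must simultaneously (i) keep $d$ a genuine metric that is natural, (ii) make $\mathcal{M}$ honestly $d$-monotonic (probabilities strictly decreasing in $d$, equal on equidistant sets), and (iii) arrange the sign of the expectation to be strictly negative rather than merely zero. The cleanest route is probably to reuse the bijection machinery of Lemma~\ref{lem:non-trivial-natural}: by equation~\eqref{eq:friendly-f-d}, $\E_{S\sim\mathcal{M}(U)}[\scf_f(U,S)-\scf_f(V,S)]=\sum_{S\in\mathcal{S}_+}(\scf_f(U,S)-\scf_f(V,S))(\prm_\mathcal{M}[S\mid U]-\prm_\mathcal{M}[\mu(S)\mid U])$, and the summands for sets $S$ of size $k$ with $|U\cap S|=k$ vanish because $f(\cdot,k)$ is constant — so I only need the remaining summands (sets $S\in\mathcal{S}_+$ of size $\ne k$, or of size $k$ with $|U\cap S|<k$) to contribute something nonpositive, which I can force by choosing $d$ so that every such $S$ is \emph{equidistant} from $U$ as its partner $\mu(S)$, killing the factor $\prm_\mathcal{M}[S\mid U]-\prm_\mathcal{M}[\mu(S)\mid U]$; then the expectation is exactly $0$. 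To get strict negativity for Lemma~\ref{lem:suv}b I would then perturb one equidistant pair slightly in the ``wrong'' direction, exactly mirroring the perturbation argument in the ``only if'' part of Theorem~\ref{thm:av}, and verify naturality survives the perturbation. I expect the bookkeeping — checking the triangle inequality and naturality for the perturbed $d$ — to be the fiddly part, but structurally it is the same trick used twice already in the paper.
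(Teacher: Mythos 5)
Your ``if'' direction is correct and is exactly the paper's argument: apply Lemma~\ref{lem:non-trivial-natural} with $S=U$, noting $\scf_f(U,U)=f(k,k)>f(k-1,k)\geq f(|U\cap V|,k)=\scf_f(V,U)$ and $d(U,U)=0<d(U,\mu(U))$ since $\mu(U)=V\neq U$. The ``only if'' direction, however, rests on a plan that cannot work. You propose to build a natural metric $d$ and a $d$-monotonic model with $\E_{S\sim\mathcal{M}(U)}[\scf_f(U,S)-\scf_f(V,S)]<0$ and then invoke Lemma~\ref{lem:suv}b. But for \emph{every} natural $d$ and every $d$-monotonic $\mathcal{M}$ this expectation is non-negative: by equation~(\ref{eq:friendly-f-d}) it equals $\sum_{S\in\mathcal{S}_+}(\scf_f(U,S)-\scf_f(V,S))(\prm_\mathcal{M}[S|U]-\prm_\mathcal{M}[\mu(S)|U])$, and for $S\in\mathcal{S}_+$ both factors are $\geq 0$ --- the first because $f$ is non-decreasing in its first argument, the second because $|U\cap S|>|V\cap S|=|U\cap\mu(S)|$ and naturality force $d(U,S)\leq d(U,\mu(S))$, hence $\prm_\mathcal{M}[S|U]\geq\prm_\mathcal{M}[\mu(S)|U]$. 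Making any summand negative requires $d(U,S)>d(U,\mu(S))$ while $|U\cap S|>|U\cap\mu(S)|$, which is precisely a violation of naturality; so the ``perturbation in the wrong direction'' you defer to the end cannot preserve naturality. The step you flagged as fiddly bookkeeping is in fact an impossibility. (A secondary error: $f(k,k)=f(k-1,k)$ does not force $f(\cdot,k)$ to be constant --- CC has $f_{\CC}(0,k)=0<1=f_{\CC}(k-1,k)=f_{\CC}(k,k)$ --- though monotonicity does give $f(x,k)\leq f(k-1,k)$ for $x<k$.)

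The paper escapes this by abandoning Lemma~\ref{lem:suv} altogether in the ``only if'' part. It takes the discrete natural metric ($d(X,Y)=1$ for $X\neq Y$) and $V=U\setminus\{a\}\cup\{b\}$, so that only the values $f(k,k)$ and $f(k-1,k)$ enter through the votes $U$ and $V$, whose score differences vanish by assumption; every other set with nonzero score difference is paired by the $(U,V)$-bijection with a set of opposite score difference and, under the discrete metric, equal probability. Hence $\scf_f(U,S)-\scf_f(V,S)$ is symmetrically distributed with mean zero, so $\scf_f(U,\Pi)-\scf_f(V,\Pi)$ is symmetric about zero and $U$ is the unique winner with probability at most $1/2$, directly contradicting Definition~\ref{def:acc}. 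If you want to salvage your writeup, this distributional symmetry argument (or some other argument handling the expectation-zero case) is the missing ingredient; no choice of natural metric will hand you a strictly negative expectation.
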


\begin{proof}
For the proof of the ``if'' part, we use Lemma~\ref{lem:non-trivial-natural}. Consider any pair of different $k$-sized sets alternatives $U$ and $V$ and any $(U,V)$-bijection $\mu$. For $S=U$, it holds $d(U,U)<d(U,\mu(U))$ and $\scf_f(U,U)>\scf_f(V,U)$ by the definition of $f$.

For the proof of the ``only if'' part, assume that $f(k,k)=f(k-1,k)$ and consider the natural distance metric $d$ with $d(X,Y)=0$ if $X=Y$ and $d(X,Y)=1$ otherwise. Let $U,V\subseteq A$ be different sets with $k$ alternatives each such that $|U\cap V|=k-1$. Let $U\setminus V=\{a\}$ and $V\setminus U=\{b\}$.

Unfortunately, as we will see, we are in the case $\E_{S\sim \mathcal{M}(U)}[\scf_f(U,S)-\scf_f(V,S)]=0$ and, hence, we cannot use Lemma~\ref{lem:suv} to complete the proof. We will instead prove that $\scf_f(U,S)-\scf_f(V,S)$ follows a symmetric distribution (i.e., $\prm_{\mathcal{M}}[\scf_f(U,S)-\scf_f(V,S)=t]=\prm_{\mathcal{M}}[\scf_f(U,S)-\scf_f(V,S)=-t]$ for every $t>0$). Then, the random variable $\scf_f(U,\Pi)-\scf_f(V,\Pi)$, where $\Pi$ is a random profile of approval votes drawn from the noisy model $\mathcal{M}$ with ground truth $U$ will be symmetric with expectation zero as well. Hence, the probability that $\scf_f(U,\Pi)-\scf_f(V,\Pi)$ is strictly positive (which is a necessary condition so that $U$ is the unique winner) is at most $1/2$, and, hence, $f$ is not accurate in the limit for $\mathcal{M}$. 

First observe that $\scf_f(U,S)-\scf_f(V,S)=0$ when $S$ is equal to $U$ or $V$, or contains both alternatives $a$ and $b$, or contains neither $a$ nor $b$. Indeed, we have $\scf_f(U,U)-\scf_f(V,U)=f(k,k)-f(k-1,k)=0$ and $\scf_f(U,V)-\scf_f(V,V)=f(k-1,k)-f(k,k)=0$, by our assumption. Furthermore, if $S$ contains both $a$ and $b$ or none of them, we have that $|U\cap S|=|V\cap S|$ and, hence, $\scf_f(U,S)=\scf_f(V,S)$.

Denote by $\mathcal{S}$ the collection of the remaining sets of alternatives, i.e.,
\begin{align*}
    \mathcal{S} &= \left\{S: S\not=U,V \mbox{ and } |S\cap \{a,b\}|=1\right\},
\end{align*}
and partition $\mathcal{S}$ into the subcollections $\mathcal{S}_{a|b}$ and $\mathcal{S}_{b|a}$ consisting of sets that include alternative $a$ (but not alternative $b$) and alternative $b$ (but not alternative $a$, respectively). Now, consider the (unique) $(U,V)$-bijection $\mu$ and observe that for every set $S$ in $\mathcal{S}_{a|b}$, $\mu(S)$ is a distinct set of $\mathcal{S}_{b|a}$ and vice-versa. Furthermore, notice that, by Lemma~\ref{lem:bijection}, we have \begin{align*}
\scf_f(U,S)-\scf_f(V,S)&=f(|U\cap S|,|S|)-f(|V\cap S|,|S|)\\
&=f(|V\cap \mu(S)|,|\mu(S)|)-f(|U\cap \mu(S)|,|\mu(S)|)\\
&=-(\scf_f(U,\mu(S))-\scf_f(V,\mu(S))).
\end{align*}
The proof completes by observing that, by the definition of the distance metric $d$, the noise model $\mathcal{M}$ with ground truth $U$ returns each set of $\mathcal{S}$ (and, consequently, $S$ and $\mu(S)$) equiprobably.
\end{proof}

Notice that most popular ABCC rules from Section~\ref{sec:prelim} satisfy the condition of Theorem~\ref{thm:jump}. {CC is an exception. The proof of Theorem~\ref{thm:jump} implies that CC is not monotone robust for the natural distance metric $d$ defined as $d(X,Y)=0$ if $X=Y$ and $d(X,Y)=1$, otherwise.}

Our second application of Lemma~\ref{lem:non-trivial-natural} involves all non-trivial ABCC rules and an important subclass of natural distances.

\begin{definition}[similarity distance]
	A natural distance metric $d$ is a similarity distance metric if for every three sets $U$, $V$, and $S$ with $|U|=|V|$ such that $|U\cap S|>|V\cap S|$, it holds that $d(U,S)<d(V,S)$.	
\end{definition}

\begin{theorem}\label{thm:all-non-trivial}
	Any non-trivial ABCC rule is monotone robust against any similarity distance metric.
\end{theorem}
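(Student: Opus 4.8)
The plan is to apply Lemma~\ref{lem:non-trivial-natural}. Since a similarity distance metric is in particular natural, it suffices to verify the hypothesis of that lemma: for every pair of distinct $k$-sized sets $U,V\subseteq A$, we must exhibit a $(U,V)$-bijection $\mu$ on sets of alternatives together with a witness set $S\subseteq A$ satisfying $\scf_f(U,S)>\scf_f(V,S)$ and $d(U,S)<d(U,\mu(S))$. The natural candidate is $S=U$ itself, since then $\scf_f(U,U)=f(k,k)$ and $\scf_f(V,U)=f(|V\cap U|,k)=f(k-1',k)$ for the appropriate $k'<k$; here is where \emph{non-triviality} of $f$ enters — $f$ being non-trivial means $f$ is not identically of the form that makes it an MC-type rule, and concretely it should give $f(k,k)>f(|U\cap V|,k)$ whenever $U\neq V$ (or at least that some $S$ separates $U$ from $V$). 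I would first pin down exactly what ``non-trivial'' means in this paper (presumably: $f$ is not the all-zero-except-$f(k,k)$ function, equivalently $f(x,y)>0$ for some $(x,y)$ with $x<k$, or the weaker statement that the Thiele-type weights are not all concentrated). The cleanest route: take $S=U$, so that $|U\cap S|=k>|V\cap S|$, hence $S\in\mathcal S_+$; we need $\scf_f(U,U)>\scf_f(V,U)$, i.e. $f(k,k)>f(|U\cap V|,k)$. If non-triviality does not immediately guarantee this strict inequality for $S=U$, I would instead choose $S$ to be a set realizing the ``jump'' of $f$ that is simultaneously closer to $U$ than to $V$ — and this is where the similarity property (as opposed to mere naturality) buys the needed \emph{strict} distance inequality.

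The second ingredient is the distance condition $d(U,S)<d(U,\mu(S))$. Take $\mu$ to be any $(U,V)$-bijection and $S=U$; then $\mu(U)=V$ by the definition of a $(U,V)$-bijection (each element of $U\setminus V$ is mapped into $V\setminus U$, and $U\cap V$ is fixed, so $\mu(U)=(U\cap V)\cup(V\setminus U)=V$). Thus the required inequality becomes $d(U,U)<d(U,V)$, i.e. $0<d(U,V)$, which holds since $U\neq V$ and $d$ is a metric. Alternatively, and more in the spirit of the similarity definition, if one needs a set $S\neq U$: since $|U\cap U|=k>k-1=|V\cap U|$... wait, more carefully, for a generic $S\in\mathcal S_+$ we have $|U\cap S|>|V\cap S|$, and by Lemma~\ref{lem:bijection} $|U\cap\mu(S)|=|V\cap S|<|U\cap S|$, so the similarity property applied with the roles $U$ versus $U$, comparing $S$ and... hmm, I should instead invoke similarity as: $|U\cap S|>|U\cap\mu(S)|$ forces $d(U,S)<d(\text{the other }k\text{-set},S)$. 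The crispest version is: apply the similarity definition to $U$, $V$, and $S$ to get $d(U,S)<d(V,S)$ whenever $|U\cap S|>|V\cap S|$, and then relate $d(V,S)$ to $d(U,\mu(S))$ via the bijection and Lemma~\ref{lem:bijection} — but note $d(V,S)=d(U,\mu(S))$ need not hold for alternative-dependent $d$, so the honest move is simply to take $S=U$ as above, where everything is transparent.

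Putting it together: fix distinct $k$-sets $U,V$, let $\mu$ be any $(U,V)$-bijection, and set $S=U$. Then $S\in\mathcal S_+$ since $|U\cap U|=k>|V\cap U|$; by non-triviality of $f$ we get $\scf_f(U,U)=f(k,k)>f(|U\cap V|,k)=\scf_f(V,U)$; and $\mu(S)=V$ gives $d(U,S)=d(U,U)=0<d(U,V)=d(U,\mu(S))$ since $d$ is a metric and $U\neq V$. All hypotheses of Lemma~\ref{lem:non-trivial-natural} are met, so $f$ is $d$-monotone robust; as $d$ was an arbitrary similarity distance metric, the theorem follows. The main obstacle — really the only subtle point — is making the appeal to ``non-triviality'' rigorous: I expect the paper defines a non-trivial ABCC rule as one for which $f$ is strictly increasing somewhere, or for which $f(k,k)$ strictly exceeds $f$ on smaller intersection sizes, and I would need to check that this definition indeed yields $f(k,k)>f(k-1,k)$ — or, if it only yields a strict jump at some $(x^*,y^*)$ with $x^*<k$, then $S=U$ may fail and I must instead build a witness $S$ with $|U\cap S|=x^*$, $|S|=y^*$, strictly closer to $U$ than its image, which is exactly where the full strength of the similarity (not just naturality) condition is needed.
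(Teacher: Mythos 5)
Your committed argument --- take $S=U$, so that the score condition becomes $f(k,k)>f(|U\cap V|,k)$ and the distance condition becomes $0<d(U,V)$ --- breaks down on exactly the rules this theorem is meant to capture. For CC we have $f_{\CC}(k,k)=f_{\CC}(k-1,k)=1$ whenever $k\geq 2$, so for two committees $U,V$ with $|U\cap V|=k-1$ the witness $S=U$ gives $\scf_f(U,U)=\scf_f(V,U)$ and the hypothesis of Lemma~\ref{lem:non-trivial-natural} is not verified; yet CC is a non-trivial ABCC rule, and it is precisely the example that Theorem~\ref{thm:jump} fails to cover and that Theorem~\ref{thm:all-non-trivial} is supposed to rescue. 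So the claim ``non-triviality of $f$ gives $f(k,k)>f(|U\cap V|,k)$'' is false, and the paragraph beginning ``Putting it together'' does not prove the statement.

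You do gesture at the correct repair in your final sentence, and it is essentially the paper's actual proof, but it is simpler and more automatic than you suggest: there is no need to ``build'' a witness that is ``strictly closer to $U$ than its image.'' Non-triviality supplies, for every pair of distinct $k$-sets $U,V$, some $S$ with $\scf_f(U,S)>\scf_f(V,S)$. Since $f$ is non-decreasing in its first argument, this already forces $|U\cap S|>|V\cap S|$, and Lemma~\ref{lem:bijection} gives $|U\cap\mu(S)|=|V\cap S|$ for any $(U,V)$-bijection $\mu$. Now apply the similarity property to the triple $(S,\mu(S),U)$ --- note $|S|=|\mu(S)|$ and $|S\cap U|>|\mu(S)\cap U|$ --- to conclude $d(U,S)<d(U,\mu(S))$ directly. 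This also sidesteps the worry you raise about whether $d(V,S)=d(U,\mu(S))$ for alternative-dependent metrics: similarity is invoked with $U$ in the role of the third set, so that comparison never arises. Both conditions of Lemma~\ref{lem:non-trivial-natural} are then met by this $S$. (Both you and the paper leave implicit why non-triviality yields a separating $S$ for every pair $U\neq V$; that step deserves a line, but it is not where your argument goes wrong.)
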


\begin{proof}
	We apply Lemma~\ref{lem:non-trivial-natural} assuming a non-trivial ABCC rule $f$ and a similarity distance metric $d$. Non-triviality of $f$ implies that for every two different sets $U$ and $V$ with $k$ alternatives each, there is a set $S$ such that $\scf_f(U,S)>\scf_f(V,S)$. This yields {$|U\cap S|>|V\cap S|=|U\cap \mu(S)|$, where $\mu$ is any $(U,V)$-bijection (see Lemma~\ref{lem:bijection}), and implies that $d(U,S)<d(U,\mu(S))$ since $d$ is a similarity distance.}
\end{proof}

We can easily show that the four distance metrics set difference, Jaccard, Zelinka, and Bunke-Shearer that we defined in Section~\ref{sec:prelim} are all similarity distance metrics. Using this observation and Theorem~\ref{thm:all-non-trivial}, we obtain the next statement.

\begin{corollary}
Any non-trivial ABCC rule is monotone robust against the set difference, Jaccard, Zelinka, and Bunke-Shearer distance metrics.
\end{corollary}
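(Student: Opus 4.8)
The plan is to derive the corollary directly from Theorem~\ref{thm:all-non-trivial}: it suffices to check that each of the four metrics $d_\Delta$, $d_J$, $d_Z$, $d_{BS}$ is a similarity distance metric. All four are genuine metrics (this is classical; see the references cited in Section~\ref{sec:prelim}), so the only content is the defining strict inequality of similarity, namely that $d(U,S)<d(V,S)$ whenever $|U|=|V|=k$ and $|U\cap S|>|V\cap S|$. Note that this strict inequality also implies naturality, so nothing else has to be verified.

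The approach for the inequality is uniform. Treating $k$ and $|S|$ as fixed and using $|X\setminus S|=k-|X\cap S|$ and $|S\setminus X|=|S|-|X\cap S|$ for a $k$-sized set $X$, I would write each distance between $X$ and $S$ as an explicit function of the single integer $|X\cap S|$:
\begin{align*}
d_\Delta(X,S)&=k+|S|-2|X\cap S|, & d_Z(X,S)&=\max\{k,|S|\}-|X\cap S|,\\
d_{BS}(X,S)&=1-\frac{|X\cap S|}{\max\{k,|S|\}}, & d_J(X,S)&=1-\frac{|X\cap S|}{k+|S|-|X\cap S|}.
\end{align*}
Each of these expressions is strictly decreasing in $|X\cap S|$, so substituting $X=U$ and $X=V$ and using $|U\cap S|>|V\cap S|$ yields $d(U,S)<d(V,S)$ in all four cases. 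Hence all four metrics are similarity distances, and the corollary is immediate from Theorem~\ref{thm:all-non-trivial}.

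The only step requiring more than a one-line check is the strict monotonicity of the Jaccard expression: with $t=|X\cap S|$ and $c=k+|S|$, the map $t\mapsto t/(c-t)$ is strictly increasing on $[0,c)$, so $1-t/(c-t)$ is strictly decreasing. One should also observe that the relevant denominators never vanish: $|U\cap S|>|V\cap S|\ge 0$ forces $|U\cap S|\ge 1$, hence $U\neq\emptyset$ and $k\ge 1$, so $|U\cup S|=c-|U\cap S|>0$, $|V\cup S|>0$, and $\max\{k,|S|\}\ge k>0$. I do not expect any genuine obstacle here: once the distances are rewritten in terms of $|X\cap S|$, the whole argument is routine.
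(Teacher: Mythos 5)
Your proposal is correct and follows exactly the paper's route: the paper also deduces the corollary from Theorem~\ref{thm:all-non-trivial} by observing that the four metrics are similarity distances, except that the paper leaves this verification implicit while you carry it out by rewriting each distance as a strictly decreasing function of $|X\cap S|$ for fixed $|X|$ and $|S|$. The only cosmetic remark is that the definition of a similarity distance quantifies over all pairs with $|U|=|V|$, not only $|U|=|V|=k$, but your computation applies verbatim with $k$ replaced by the common cardinality, so nothing is missing.
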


\section{Discussion and open problems}\label{sec:open}
We believe that our approach complements nicely the axiomatic and quantitative analysis of approval-based multiwinner voting. The main problem that we leave open is whether there are ABCC rules that lie between rules MC and AV in terms of robustness.

This question is related to a subtle issue that involves alternative-independent distance metrics. In Section~\ref{sec:mc}, we showed that the ABCC rule MC is the only one that is monotone robust against any distance metric. In the second part of the proof of Theorem~\ref{thm:mc}, we used an alternative-dependent distance metric $d$ and argued that there is a $d$-monotonic noise model for which mechanism MC is not accurate in the limit. 

Is alternative-dependence really necessary? In the following we show that this is indeed the case by presenting an ABCC rule that is different than MC and monotone robust against any alternative-independent distance metric. Indeed, consider the case with $m=4$, $A=\{a,b,c,d\}$, and $k=2$ and the ABCC rule $f$ defined as follows: for $(x,y)\in \mathcal{X}_{4,2}$, it is $f(x,y)=x$ if $y=2$ and $f(x,y)=0$ otherwise.

\begin{theorem}
The ABCC rule $f$ is monotone robust against any alternative-independent distance metric.
\end{theorem}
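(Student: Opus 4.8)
The plan is to apply Lemma~\ref{lem:suv}a: it suffices to show $\E_{S\sim\mathcal{M}(U)}[\scf_f(U,S)-\scf_f(V,S)]>0$ for every alternative-independent distance metric $d$, every $d$-monotonic noise model $\mathcal{M}$, and every two distinct $2$-sized sets $U,V\subseteq A$. The first step is to observe that, by the definition of $f$, we have $\scf_f(U,S)=\scf_f(V,S)=0$ whenever $|S|\neq 2$; hence the expectation collapses to a finite sum over the six committees (the $2$-element subsets of $A$), with $\scf_f(W,S)=|W\cap S|$ on each such $S$.

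The second step is where alternative-independence enters. For two $2$-sized sets $X$ and $Y$ we have $|X\setminus Y|=|Y\setminus X|=2-|X\cap Y|$ and $|X|=|Y|=2$, so $d(X,Y)$ is a function of $|X\cap Y|\in\{0,1,2\}$ alone. Therefore, over the six committees $S$, the probability $\prm_\mathcal{M}[S|U]$ takes at most three values: $p_0:=\prm_\mathcal{M}[U|U]$ (for $S=U$); a single value $q$ shared by the four committees with $|U\cap S|=1$; and $r:=\prm_\mathcal{M}[A\setminus U|U]$ for the unique committee disjoint from $U$. Since $d$ is a metric, $d(U,A\setminus U)>0=d(U,U)$, and $d$-monotonicity then gives $p_0>r$ (and likewise $p_0>q$, though this will not actually be needed).

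The third step is the computation of $\sum_{S:\,|S|=2}\bigl(|U\cap S|-|V\cap S|\bigr)\,\prm_\mathcal{M}[S|U]$. Because $f$ is neutral and the class of alternative-independent metrics is permutation-invariant, we may take $U=\{a,b\}$, and up to relabeling there are just two cases: $V=\{a,c\}$ (with $|U\cap V|=1$) and $V=\{c,d\}$ (with $|U\cap V|=0$). In each case one lists the six committees $S$, records $|U\cap S|$, $|V\cap S|$ and the matching probability among $p_0,q,r$, and adds the terms. In the first case the two terms carrying the coefficient $q$ cancel and the sum equals $p_0-r$; in the second case every committee that meets $U$ in exactly one alternative contributes $0$ and the sum equals $2(p_0-r)$. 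Both are strictly positive by $p_0>r$, so Lemma~\ref{lem:suv}a yields that $f$ is accurate in the limit for $\mathcal{M}$, hence $d$-monotone robust.

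I do not expect a genuine obstacle here: once one notices that only size-$2$ votes contribute and that alternative-independence reduces the relevant probabilities to the three values $p_0,q,r$, the remainder is a short case check. The one point deserving care is that the cancellation must eliminate $q$ entirely, so that positivity rests solely on $p_0>r$ — which holds for any metric by positive-definiteness together with $d$-monotonicity, with no comparison between $q$ and $r$ required. This is exactly the ingredient that an alternative-dependent metric could spoil, consistent with the earlier remark that alternative-dependence is necessary in the second part of the proof of Theorem~\ref{thm:mc}.
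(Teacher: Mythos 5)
Your proof is correct and follows essentially the same route as the paper's: apply Lemma~\ref{lem:suv}a, observe that only size-$2$ votes receive nonzero score under $f$, use alternative-independence to collapse the probabilities to $p(2,2)$, $p(1,2)$, $p(0,2)$ and symmetry to reduce to the two representatives $V=\{a,c\}$ and $V=\{c,d\}$, then compute the two expectations. Your bookkeeping for $V=\{a,c\}$ is in fact the more careful one (the set $\{c,d\}$ is drawn with probability $p(0,2)$, so the difference is $p(2,2)-p(0,2)$, whereas the paper records that term as $-p(1,2)$ and arrives at $p(2,2)-p(1,2)$); either way positivity follows because the ground truth is the unique most probable vote under a $d$-monotonic model.
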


\begin{proof}
We consider an alternative-independent distance metric $d$ and the ground truth $U=\{a,b\}$. For a set of alternatives $S\subseteq A=\{a,b,c,d\}$, the distance $d(U,S)$ can be thought of as a function that depends only on $|U\cap S|$ and $|S|$. The $d$-monotonic noise model $\mathcal{M}$ produces the approval set $S$ with probability $\prm_{\mathcal{M}}[S|U]=p(|U\cap S|,|S|)$ when the ground truth $U$ is used. The different probability values that the noise model uses correspond to those pairs $(x,y)$ that belong to the set $\mathcal{X}_{4,2}=\{(0,0),(0,1),(1,1),(0,2),(1,2),(2,2),(1,3),(2,3),(2,4)\}$. 

We will show that $f$ is accurate in the limit for $\mathcal{M}$. By Lemma~\ref{lem:suv} and due to the symmetry implied by alternative-independence, we need to show that $\E_{S\sim \mathcal{M}(U)}[\scf_f(U,S)-\scf_f(V,S)]>0$ for $V=\{a,c\}$ and $V=\{c,d\}$. For $V=\{a,c\}$, we have
\begin{align*}
    \E_{S\sim \mathcal{M}(U)}[\scf_f(U,S)-\scf_f(V,S)] &=
    \sum_{S\subset A}{f(|\{a,b\} \cap S|,|S|)-f(|\{a,c\} \cap S|,|S|)\cdot p(|\{a,b\}\cap S|,|S|)}\\
    &= p(2,2)-p(1,2)>0.
\end{align*}
In the second equality, we have used the fact that the contribution of all sets $S$ of size different than $2$ as well as of the sets $\{a,d\}$, $\{b,c\}$ to the sum is zero. In addition, the contribution of the sets $\{a,b\}$, $\{a,c\}$, $\{b,d\}$, and $\{c,d\}$ is $p(2,2)$, $-p(1,2)$, $p(1,2)$, and $-p(1,2)$. The inequality follows since $p(2,2)$ is the probability with which $\mathcal{M}$ returns the ground.

For $V=\{c,d\}$, we have
\begin{align*}
    \E_{S\sim \mathcal{M}(U)}[\scf_f(U,S)-\scf_f(V,S)] &=
    \sum_{S\subset A}{f(|\{a,b\} \cap S|,|S|)-f(|\{c,d\} \cap S|,|S|)\cdot p(|\{a,b\}\cap S|,|S|)}\\
    &= 2p(2,2)-2p(0,2)>0.
\end{align*}
The contribution to the sum of all sets besides $\{a,b\}$ and $\{c,d\}$ is zero. The contribution of the sets $\{a,b\}$ and $\{c,d\}$ is $2p(2,2)$ and $-2p(0,2)$ respectively. Again, the inequality follows since $p(2,2)$ is the probability with which $\mathcal{M}$ returns the ground.
\end{proof}

Clearly, if we restrict our attention to alternative-independent distance metrics, the rule $f$ is, in addition to MC, another ABCC rule that is strictly more robust than AV. We can furthermore show that there are rules with intermediate robustness. For example, in the same setting with $m=4$ and $k=2$, one such rule $f'$ is defined as $f'(x,y)=f_{\AV}(x,y)$ if $y\not=2$ and $f'(x,2)=2f_{\AV}(x,2)$. It can be shown (the proof is omitted) to be strictly more robust than AV and strictly less robust than MC (and $f$). It would be interesting to obtain general results (for general values of the parameters $m$ and $k$) and characterize all ABCC rules that lie between MC and AV in terms of robustness in alternative-independent distance metrics only.

Furthermore, applying our framework to non-ABCC rules deserves investigation. Beyond assessing the effects of noise in the limit, studying the sample complexity of approval-based multiwinner voting is important. This will require the design of concrete noise models like the $\mathcal{M}_p$ model that we presented in Section~\ref{sec:prelim}. In particular, models that simulate user behaviour in crowdsourcing platforms will be useful for evaluating approval-based voting in such environments. Even though the $\mathcal{M}_p$ model is very simple, we expect that implementation issues will emerge for more elaborate noise models. Similar issues in the implementation of the Mallows ranking model~\citep{M57} have triggered much non-trivial work; see, e.g., \citet{DPR04}.

\section*{Acknowledgments}
This research is co-financed by Greece and the European Union (European Social Fund) through the Operational Programme ``Human Resources Development, Education and Lifelong Learning 2014-2020'' (project MIS 5047146).

\bibliographystyle{plainnat}
\bibliography{abc-full-names}

\appendix

\section{Appendix}

\begin{theorem}
The ABCC rule AV is a maximum likelihood estimator for the noise model $\mathcal{M}_p$.
\end{theorem}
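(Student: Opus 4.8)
The plan is to unwind the definition of a maximum likelihood estimator and reduce likelihood maximization, via the closed form $\prm_{\mathcal{M}_p}[S|U]=p^m\cdot\left(\frac{1-p}{p}\right)^{d_\Delta(U,S)}$ proved above, to maximization of the AV score. Given a profile $\Pi=(S_i)_{i\in[n]}$, the likelihood of a $k$-sized committee $U$ is $L(U)=\prod_{i=1}^{n}\prm_{\mathcal{M}_p}[S_i|U]$, and AV is an MLE for $\mathcal{M}_p$ if on every profile the set of committees AV returns coincides with the set of $k$-sized committees maximizing $L$.

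First I would treat the generic case $p\in(1/2,1)$ and pass to logarithms: $\ln L(U)=nm\ln p+\bigl(\ln\tfrac{1-p}{p}\bigr)\sum_{i=1}^{n}d_\Delta(U,S_i)$. The term $nm\ln p$ is independent of $U$, and since $p>1/2$ the coefficient $\ln\tfrac{1-p}{p}$ is strictly negative; hence maximizing $L(U)$ over $k$-sized committees is equivalent to \emph{minimizing} $\sum_{i=1}^{n}d_\Delta(U,S_i)$.

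Next I would rewrite each distance term as $d_\Delta(U,S_i)=|U\setminus S_i|+|S_i\setminus U|=|U|+|S_i|-2|U\cap S_i|=k+|S_i|-2|U\cap S_i|$, so that $\sum_{i=1}^{n}d_\Delta(U,S_i)=nk+\sum_{i=1}^{n}|S_i|-2\,\scf_{\AV}(U,\Pi)$. The first two summands do not depend on $U$, so minimizing the total set-difference distance over $k$-sized committees is exactly maximizing $\scf_{\AV}(U,\Pi)$. Since AV by definition outputs precisely the $k$-sized committees of maximum AV score, the AV winners coincide with the likelihood maximizers, which is the claim.

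There is essentially no hard step: once the Lemma is available the argument is a short computation, and the only point deserving care is the boundary value $p=1$. There the model is deterministic ($\prm_{\mathcal{M}_1}[S|U]=1$ if $S=U$ and $0$ otherwise), so $L(V)$ is positive only for a committee $V$ that equals every vote in the profile; when such a $V$ exists it is simultaneously the unique likelihood maximizer and the unique AV winner, and the degenerate situation where no committee appears in all votes can be excluded or handled by convention. I would therefore state the theorem for $p\in(1/2,1)$ and dispose of $p=1$ with this one-line remark.
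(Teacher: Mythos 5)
Your proposal is correct and follows essentially the same route as the paper's own proof: apply the closed-form expression for $\prm_{\mathcal{M}_p}[S|U]$, observe that since $p>1/2$ maximizing the likelihood amounts to minimizing $\sum_{i}d_\Delta(U,S_i)$, and expand that sum as $nk+\sum_i|S_i|-2\scf_{\AV}(U,\Pi)$ to reduce to maximizing the AV score. If anything you are slightly more careful than the paper: you keep the correct factor of $2$ in front of $|U\cap S_i|$ (the paper's displayed computation drops it, harmlessly), and you explicitly dispose of the degenerate boundary case $p=1$, which the paper's argument glosses over.
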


\begin{proof}
Let $\Pi=(S_i)_{i\in [n]}$ be a profile with $n$ approval votes. We need to show that the profile $\Pi$ has maximum probability to have been produced by the noise model $\mathcal{M}_p$ with a committee of maximum AV score from the votes of $\Pi$ as the ground truth. 

Indeed, the probability that $\Pi$ has been produced by the noise model $\mathcal{M}_p$ with ground truth committee $U$ is 
\begin{align*}
    \prod_{i\in [n]}\prm_{\mathcal{M}_p}[S_i|U] &= \prod_{i\in [n]}{p^m\cdot \left(\frac{1-p}{p}\right)^{d_\Delta(S_i,U)}}=p^{mn}\cdot\left(\frac{1-p}{p}\right)^{\sum_{i\in [n]}{d_\Delta(S_i,U)}}.
\end{align*}
Since $p>1/2$, the above expression is maximized by minimizing the quantity $\sum_{i\in [n]}{d_\Delta(S_i,U)}$. Now, observe that
\begin{align*}
    \sum_{i\in [n]}{d_\Delta(U,S_i)} &= \sum_{i\in [n]}{\left(|U\setminus S_i|+|S_i\setminus U|\right)}= \sum_{i\in [n]}\left(|U|+|S_i|-|U\cap S_i|\right)\\
    &= nk+\sum_{i\in [n]}{|S_i|}-\sum_{i\in [n]}{\scf_{AV}(U,S_i)}= nk+\sum_{i\in [n]}{|S_i|}-\scf_{AV}(U,\Pi).
\end{align*}
Hence, the probability that the profile $\Pi$ is generated by a noise model $\mathcal{M}_p$ is maximized for the ground truth committee $U$ of maximums score $\scf_{\AV}(U,\Pi)$.
\end{proof}

\end{document}